\newcommand{\paraf}[1]{\noindent\textbf{#1}}
\newcolumntype{C}[1]{>{\centering\arraybackslash}m{#1}}
\theoremstyle{plain}
\newtheorem{theorem}{Theorem}
\newtheorem{lemma}{Lemma}
\newtheorem{assumption}{Assumption}
\theoremstyle{definition}
\newtheorem{definition}[theorem]{Definition}
\newcommand{\M}{{\mathcal{M}}}
\newcommand{\name}{\texttt{DiffSketch}\xspace}
\newcommand{\fedavg}{\texttt{FedAvg}\xspace}
\newcommand{\cpsgd}{\texttt{cpSGD}\xspace}
\begin{document}

\title{Privacy for Free: Communication-Efficient Learning with Differential Privacy Using Sketches}
\author{\begin{tabular}{cccc}
\small Tian Li & \small Zaoxing Liu & \small Vyas Sekar & \small Virginia Smith\tabularnewline 
\small CMU & \small  CMU  & \small  CMU & \small  CMU \tabularnewline
\small{\texttt{\href{mailto:tianli@cmu.edu}{tianli@cmu.edu}}} & \small{\texttt{\href{mailto:zaoxing@cmu.edu}{zaoxing@cmu.edu}}} &
\small{\texttt{\href{mailto:vsekar@andrew.cmu.edu}{vsekar@andrew.cmu.edu}}} & \small{\texttt{\href{mailto:smithv@cmu.edu}{smithv@cmu.edu}}}
\end{tabular}}
\date{}
\maketitle

\begin{abstract}

Communication and privacy are two critical concerns in distributed learning. 
Many existing works treat these concerns separately. In this work, we argue that a natural connection exists between methods for communication reduction and privacy preservation in the context of distributed machine learning. In particular, we prove that Count Sketch, a simple method for data stream summarization, has inherent differential privacy properties.\footnote{\textit{We note that the definition of local differential privacy used in our work is weaker than the standard local privacy definition. In addition, we are aware of some issues with our current proof of the differential privacy properties of Count Sketch. We are currently working on a revision of this draft.}} Using these derived privacy guarantees, we  propose a novel sketch-based framework (\name) for distributed learning, where we compress the transmitted messages via sketches to \textit{simultaneously} achieve communication efficiency and provable privacy benefits. 
Our evaluation demonstrates that \name can provide strong differential privacy guarantees (e.g., $\varepsilon=1$) and reduce communication by 20-50$\times$ with only marginal decreases in accuracy.
Compared to baselines that treat privacy and communication separately, \name improves absolute test accuracy by 5\%-50\% while offering the same privacy guarantees and communication compression.
\end{abstract}

\section{Introduction}
Communication costs and privacy concerns are two critical challenges in performing distributed machine learning with sensitive information. Indeed, these challenges are particularly relevant for the increasingly common problem of \textit{federated learning}, in which data is collected and stored across a network of devices such as mobile phones or wearable devices~\citep{mcmahan2016FedAvg,li2019federated}. Communicating data across such a distributed network in order to learn an aggregate model is both costly and can potentially reveal sensitive user information~\citep{mcmahan2018diff, carlini2018secret}. 

Many prior  efforts have separately considered communication or privacy in distributed machine learning. For example, common strategies to reduce the size of messages sent across the network include sparsification, quantization, or subsampling~\citep{konevcny2016federated}. Previous privacy-preserving methods typically add noise to guarantee differential privacy~\citep[][]{mcmahan2018diff}, or use cryptographic protocols such as secure multi-party computation (SMC)~\citep{bonawitz2017practical}. While these  approaches are effective at either reducing communication or protecting privacy independently, the two goals are largely treated as orthogonal to one another. As we demonstrate, this can be problematic when aiming to achieve both private and efficient learning, as using a direct combination of the above techniques~\citep[e.g.,][]{agarwal2018cpsgd} can significantly degrade overall accuracy (\S\ref{sec:eval:compare}).

In this work, we argue that the goals of communication reduction and privacy protection are closely related, and show that leveraging this relation has significant benefits for distributed learning. Intuitively, both tasks share an aim to reduce, mask, or transform information sent across the network in a way that preserves the underlying learning task. While  this connection has been observed in  other  settings~\citep{zhou2009differential,xiong2016randomized}, we are not aware of  other work that  formalizes these  connections and optimizes both  goals in the context of distributed machine learning. 

In particular, we identify \emph{sketching algorithms} (sketches) as a natural tool to jointly consider communication and privacy in distributed learning.
Sketches use independent hash functions to compress the input data with bounded errors, and have well-studied trade-offs between space and accuracy~\citep{ams,CountSketch,CMSketch}. While sketches have previously been used to reduce communication in distributed networks~\citep[][]{ivkin2019communication}, we show that by randomizing the data with independent hash functions, sketches can in fact also provide \textit{privacy for free}, as they  have provable differential privacy benefits (\S\ref{sec:analysis}).  More specifically, we demonstrate that sketches can be used to achieve \emph{local differential privacy}~\citep{warner1965randomized,duchi2013local} in distributed machine learning---a setting where we do not assume a trusted central server and the central server cannot see individual model updates. %\textcolor{red}{While our previous vision paper suggested the promise of leveraging sketching to enhance the privacy of federated learning~\citep{liu2019Enhancing}, it did not provide its differential privacy guarantees, or propose any distributed learning framework.}

Based on our derived privacy guarantees, we introduce 
\name, a framework built upon canonical sketches such as  Count Sketch~\citep{CountSketch} to jointly optimize communication and privacy in distributed learning. While we explore \name in the context of distributed machine learning, we note that it is general in that it could potentially be applied to any task that requires estimating the mean of a set of distributed vectors.
Empirically, we verify the communication reduction  and privacy benefits of \name for applications with both distributed SGD and \fedavg, a popular method for federated learning~\citep{mcmahan2016FedAvg}. Our results demonstrate that by considering communication and privacy jointly, \name can deliver 5\%-50\%  improvements in terms of absolute test accuracy compared with baselines for fixed levels of privacy and communication.

{\bf Contributions\footnote{{We note that a preliminary version of our vision appeared in~\citep{liu2019Enhancing}, where we suggested the promise of using sketching to enhance privacy in federated learning. However, in this preliminary work we did not prove any differential privacy guarantees of sketches, propose a distributed learning framework to capitalize on the connection, or explore the privacy benefits empirically.}}.} 
In this work, we provide the first differential privacy guarantees of Count Sketch without additional mechanisms (\S\ref{sec:analysis}). Based on these derived privacy guarantees, we design \name, a communication-efficient and differentially-private distributed learning framework using Count Sketch as a building block (\S\ref{sec:algorithms}). 
Our evaluation demonstrates that \name offers strong privacy guarantees and communication reduction benefits with significantly improved accuracy compared to baselines (\S\ref{sec:eval}).

\section{Related Work and Background}\label{sec:related_work}

\textbf{Communication-efficient learning.} 
Communication is a key bottleneck in distributed learning, and has been studied extensively in classical distributed (e.g., data center) computing as well as emerging  applications of federated learning. A common approach to reducing communication is to limit the size of messages sent across the network using various compression methods~\citep[e.g.,][]{konevcny2016federated,caldas2018expanding,ivkin2019communication}. This is particularly useful for applications such as deep learning, where the messages sent are  similar in size to the model and can thus be quite large. As we show in this work, compression methods such as sketching are related to the goals of privacy preservation, but are typically explored in isolation. 

A second, orthogonal approach to communication-efficiency is to reduce the total number of communication rounds in training by developing flexible and efficient distributed optimization methods~\citep{AIDE_reddi_16,COCOA_Smith_2016,fed_multitask_smith_2017,mcmahan2016FedAvg,local_SGD_stich_18}. 
While not the focus of this work, we explore our framework in conjunction with one such canonical communication-efficient optimization method, \fedavg~\citep{mcmahan2016FedAvg}, in \S\ref{sec:algorithms}.

\paragraph{Privacy in distributed learning.}\label{sec:related_work:privacy} 

In a variety of distributed learning scenarios handling user-generated data, privacy has become increasingly important to consider. Indeed, privacy concerns can even motivate the need for distributed learning---for example, privacy is a key motivation in federated learning, as it  necessitates that raw user data remain local and thus distributed across the network~\citep{mcmahan2016FedAvg}.  There are two common approaches to preserve privacy in distributed settings, based either on statistical or cryptographical foundations.
The first is to provide differential privacy 
by adding random perturbations (e.g., random noise drawn from a Gaussian or Laplacian distribution) to the output of a function~\citep{mcmahan2018diff,li2019differentially,agarwal2018cpsgd,abadi2016deep}. In distributed contexts, it is common to think of differential privacy as either being enforced in a ``local'' or ``global'' sense depending on whether the server is a trusted party. We provide more formal definitions of differential privacy in \S\ref{sec:analysis:preli}, and demonstrate that our framework, \name, is able to achieve strong local differential privacy guarantees via sketches. 

Another line of work is based on cryptography, including secure multiparty computation (SMC)~\citep[][]{bonawitz2017practical,ghazi2019scalable,chen2019secure}. SMC protects users' privacy by encrypting individual inputs such that multiple parties can jointly compute a function without learning about the input information from any party. However, cryptographic-based approaches are not well-suited to distributed learning, as they can incur significant communication and computation overheads~\citep{bassily2017practical}. In general, these types of  privacy-preserving methods  tend to increase the  communication cost in distributed settings.

\paragraph{Connections between communication and privacy.}\label{sec:related_work:connection} 
Some recent works have explored connections between communication compression  and privacy~\citep{xiong2016randomized,zhou2009differential}. However, they focus on much simpler settings such as multiplicative database transformation~\citep{zhou2009differential}, or transmitting raw data in sensor networks~\citep{xiong2016randomized}, and can only preserve limited information such as the covariance of the original data~\citep{zhou2009differential}. \citet{duchi2019lower} explore an equivalence between estimation under privacy constraints and estimation under communication constraints, but for different purposes of developing lower bounds to showcase the fundamental limitations of differential privacy. We explore such connections in distributed learning both theoretically and empirically through sketching, and demonstrate that \name achieves competitive accuracy with guaranteed differential privacy and convergence (\S\ref{sec:eval}). 
Finally, recent work  called \cpsgd~\citep{agarwal2018cpsgd} proposes modifications to distributed SGD to make the method both private and communication-efficient. However, the authors treat these as separate issues, and develop different approaches to address each within \cpsgd. In \S\ref{sec:eval} we compare directly with \cpsgd and demonstrate that by \emph{separately} reducing communication and enforcing privacy, errors in \cpsgd are compounded; by instead considering these {\em jointly}, \name provides substantially higher accuracy.

\paragraph{Sketches.}
Sketching algorithms (sketches) provide   approximate estimates of   different statistics (such as computing the distinct count or average) over a dataset, and have been widely used in streaming data processing~\citep{ams,SpaceSavings,cormode2008finding}, databases~\citep{CountSketch,CMSketch,cormode2011sketch}, and network measurement~\citep{OpenSketch,univmon,nitrosketch}.  
Recently sketches have also been used to improve large-scale machine learning, e.g., by compressing model updates~\citep{jiang2018sketchml}, identifying significant coordinates in the gradient vectors~\citep{ivkin2019communication}, and reducing memory usage in model training~\citep{spring2019compressing}. However, these works do not focus on  privacy. 

Some recent efforts  consider extensions to sketches to provide  differential privacy  through the use of  random noise~\citep{MelisDC16}, random sampling~\citep{zhu2019federated}, or other randomization~\citep{bassily2017practical,ghazi2019scalable}. However, they require adding these mechanisms on top of sketches, and do not investigate any inherent privacy properties of sketches themselves. 
Our analysis shows that sketches have inherent differential privacy properties (\S\ref{sec:analysis}) and can be utilized to design private distributed learning algorithms that achieve better communication and accuracy trade-offs than state-of-the-art mechanisms (\S\ref{sec:algorithms}).

\section{Differential Privacy of Sketches}\label{sec:analysis}
In this section, we analyze the {intrinsic} differential privacy properties of a canonical sketch (Count Sketch). We begin by providing relevant background on Count Sketch and differential privacy, and then present our differential privacy bounds in \S\ref{sec:analysis:dp}. 

\subsection{Preliminaries}\label{sec:analysis:preli}

Count Sketch is a common sketching algorithm that can be used to compress real-valued vectors.  
As depicted in Figure~\ref{fig:countsketch}, Count Sketch works by using $t$ independent hash functions to map each element in a vector $g \in \mathbb{R}^n$ to $t$ distinct bins in arrays of size $k \ll n$. This may cause some collisions, and the goal of the sketch is therefore to approximate the true values within bounded errors when queried.    
Due to the linearity of Count Sketch, it can be used to compress messages in distributed settings, as it is easy to merge sketched messages since $\bm{S}(g_1 + g_2) = \bm{S}(g_1) + \bm{S}(g_2)$ for a sketch method $\bm{S}$ and vectors $g_1$ and $g_2$. 

\begin{figure}
    \centering
    \includegraphics[width=0.5\textwidth]{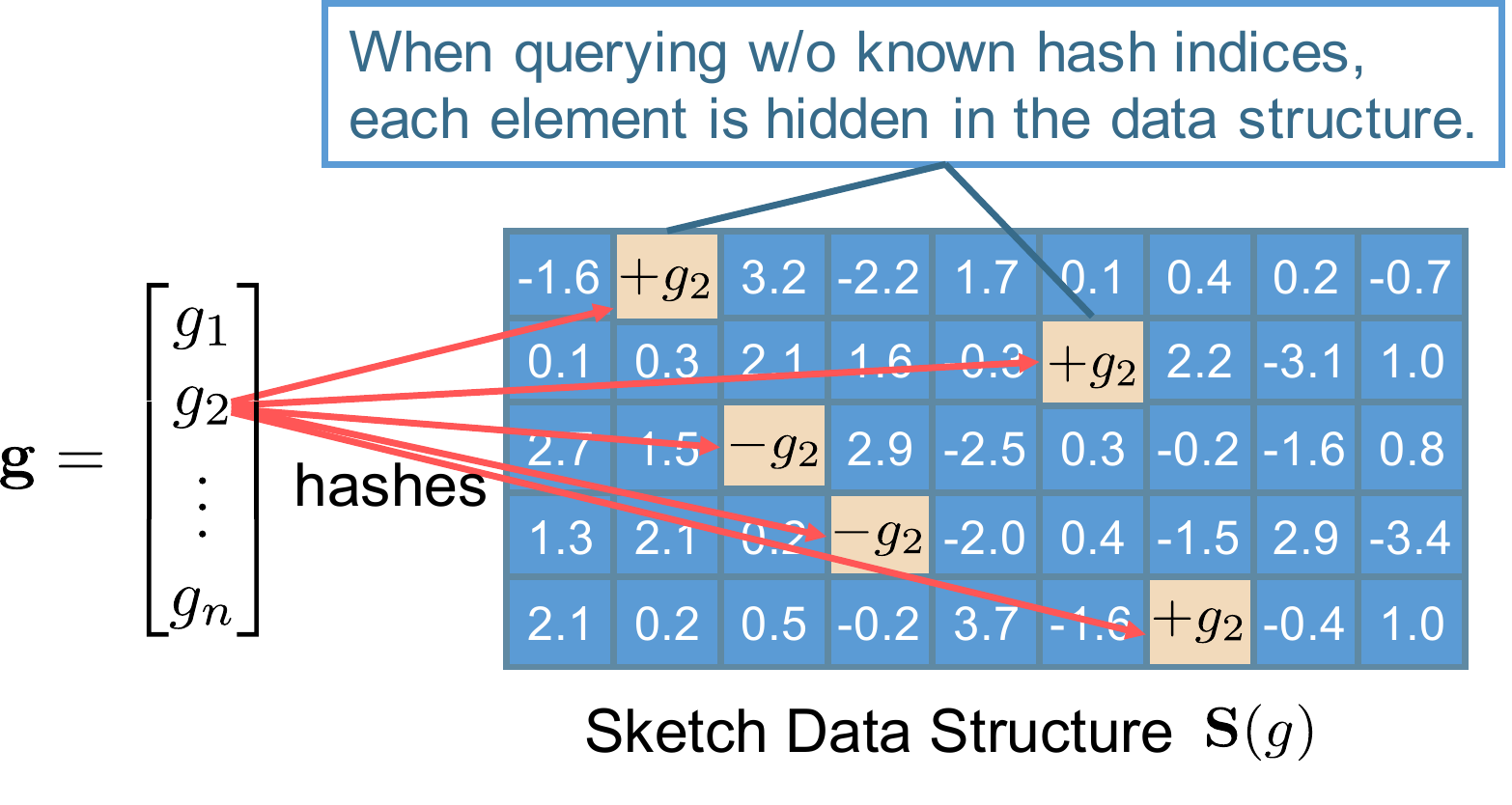}
    \caption{An illustration of using Count Sketch $\bm{S}_{5 \times 9}$ to compress a vector $g \in \mathbb{R}^n$. Each element $g_i \in g$ is mapped to five bins in five counter arrays via independent hash functions. It is difficult for third parties to infer the original inputs. We prove that the Count Sketch algorithm is differentially private in \S\ref{sec:analysis:dp}.
    }
    \label{fig:countsketch}
\end{figure}

We summarize Count Sketch in Algorithm~\ref{alg:countsketch}, and defer interested readers to~\citet{CountSketch} for additional background. The algorithm consists of two key operations for any vector $g \in \mathbb{R}^n$:
\begin{enumerate}
    \item \emph{Encoding ${g}$ into a sketch table $\bm{S}_{t \times k}({g})$.} To encode every $g_i \in g$, we use a set of pairwise independent hash functions $\{h_{j,1 \leq j \leq t}: [n] \to [k] \}$, along with a set of 2-wise independent sign hash functions $\{\text{sign}_{j,1 \leq j \leq t}: [n] \to \{+1,-1\}\}$ to map each $g_i$ to $t$ different bins in the arrays of the table.
    \item \emph{Querying $\bm{S}_{t \times k}({g})$ to obtain an estimation $\tilde{g}$ of ${g}$.} To query $g_i$ from $\bm{S}_{t \times k}({g})$, we query the median of $t$ approximated values identified by the indexes of $h_j(i)~(1\leq j \leq t)$. To achieve $\mu L_2$ additive errors on $\tilde{g}$ with $1-\delta$ probability, the size of the table ($t$ and $k$) are configurable as $t=O\left(\ln(\frac{1}{\delta}\right))$ and $k=O\left(\frac{1}{\mu^2}\right)$~\citep{CountSketch}.
\end{enumerate}

When analyzing Count Sketch, our insight is that the randomization from the independent hash functions potentially provides differential privacy in the sketch table. Intuitively, in Count Sketch, the differences in the output distribution caused by minor changes in the input should be bounded with high probability. This observation mirrors the definition of differential privacy, as formally stated below. 

\begin{definition}[\textit{$\varepsilon$-differential privacy~\citep{dwork2011differential}}]\label{def:dp}
A randomized mechanism $\M$ satisfies $\varepsilon$-differential privacy, if for input data $D_1$ and $D_2$ differing by up to one element,  and for any output $S$ of $\M$,
\begin{equation*}
    Pr[\M(D_1) \in S] \le e^{\varepsilon} \cdot Pr[\M(D_2) \in S].
\end{equation*}
\end{definition}

Informally, the above definition states that when $\varepsilon$ is small enough, the two output distributions are closer to each other, and the method $\mathcal{M}$ becomes more private as it is difficult  to determine whether the input is $D_1$ or $D_2$.
There are two common privacy models in distributed networks: \emph{local privacy} where the central aggregator/server is not trusted, and \emph{global privacy} where we assume a trusted central server~\citep{warner1965randomized,duchi2013local}. 
As discussed in \S\ref{sec:algorithms}, our proposed framework \name leveraging Count Sketch satisfies the stronger local privacy model. 
We next analyze the differential privacy properties of Count Sketch. 

\begin{algorithm}[t]
        \begin{algorithmic}[1]
    	    \STATE {\bf Input:}  $g \in \mathbb{R}^n$, $t$, $k$,  $\bm{S}_{t \times k}$, $h_i~(1 \leq i \leq t)$, $\text{sign}_i~(1 \leq i \leq t)$
    	    \vspace{2mm}
    	    \STATE {\bf \underline{Compress vector $g \in \mathbb{R}^n$ into $\bm{S}(g)$:}}
    	    \STATE {Initialize $\bm{S}_{t\times k}$ to all zeros's}
    	    \FOR {$g_i \in {g}$}
	            \FOR  {$j=1, \cdots, t$}
	                \STATE ${\bm S}[j][h_j(i)] \leftarrow {\bm S}[j][h_j(i)] + \text{sign}_j(i) \cdot g_i$
	            \ENDFOR
	        \ENDFOR
	       \RETURN ${\bm{S}_{t\times k}}$
	       \vspace{2mm}
	       \STATE {\bf \underline{Query $\tilde{g} \in \mathbb{R}^n$ from $\bm{S}(g)$:}}
	       \FOR {$i = 1, \cdots, n$}
	            \STATE $\tilde{g_i}=\text{Median}\{\text{sign}_j(i) \cdot {\bm S}[j][h_j(i)]: 1\leq j \leq t\}$
	        \ENDFOR
	        \RETURN $\tilde{g}$
	  \end{algorithmic}
	  \caption{Count Sketch to compress $g \in \mathbb{R}^n$.}\label{alg:countsketch}
\end{algorithm}

\subsection{Differential Privacy Analysis}\label{sec:analysis:dp}
To analyze the privacy guarantees of sketches, we need to provide some measurement of the output distribution that reflects the amount of noise that results from hash collisions in the sketch. 
To achieve this, we quantify the input data distribution with the following assumptions: (a) each element in the input vector is bounded with a large probability, and (b) the inputs are drawn from Gaussian distributions. Under these assumptions, we provide the first differential privacy guarantees of  Count Sketch without any additional noise.

\begin{assumption}[Input vector distribution]\label{assum:basic} 
Following the work of~\citet{gong2014gradient,fergus2006removing,levin2007image,parmas2018total}, we assume that for any input vector $D$ with length $|D|=n$,
 each element $d_i \in D$ is drawn I.I.D. from a Gaussian distribution: $d_i \sim$ $\mathcal{N}(0,\sigma^2)$, and bounded by a constant with a large probability: $|d_i| \leq \alpha, 1 \leq i \leq n$, for some constant $\alpha$.
\end{assumption}

As discussed in \S\ref{sec:algorithms}, we apply Count Sketch to compress gradients (or model updates) in distributed learning. Therefore, the input vectors are gradients (or model updates) generated from the workers' local data.  
We note that similar assumptions on normality have been previously used to characterize gradients for other applications~\citep[e.g.,][]{gong2014gradient,fergus2006removing,levin2007image,parmas2018total}.
We also empirically verify that the gradients or model updates closely match Gaussian distributions, and plot the distribution of real updates in Figure~\ref{fig:mnist_grad} and~\ref{fig:shakespeare_grad} in Appendix~\ref{app:eval:grad}. As we will discuss, our local privacy bound is a function of the local gradients/model updates bound $\alpha$ and its variance $\sigma^2$. When leveraging sketches for privacy, we dynamically estimate $\alpha$ and $\sigma^2$ locally for each participating worker during training without prior knowledge, and use the estimated values to compute the privacy parameter $\varepsilon$.

Next, we state a lemma about Count Sketch's error bounds before proving the differential privacy in our main theorem.
\begin{lemma}[Estimation error of Count Sketch~\citep{CountSketch}]\label{lemma:cs_error}
For a sketching mechanism $\mathcal{M}$ using Count Sketch with $t$ arrays of $k$ bins, for any input element $d_i \in D$ and query $\mathcal{Q}$, with probability $p \geq 1-\delta$,
\begin{align*}
    |\mathcal{Q(M}(d_i))-d_i| \leq \mu \|D\|_2,
\end{align*}
where $k=O\left(\frac{e}{\mu^2}\right)$, and $t=O\left(\ln(\frac{1}{\delta})\right)$.
\end{lemma}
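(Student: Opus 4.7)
The plan is to prove this via the standard two-level concentration argument that underlies Count Sketch: Chebyshev's inequality for each row, followed by a Chernoff bound for the median across rows.

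First I would fix an index $i \in [n]$ and a row $j \in [t]$, and write the row-$j$ estimator explicitly. By construction, $\text{sign}_j(i)\cdot\bm{S}[j][h_j(i)] = d_i + \sum_{i' \neq i} \text{sign}_j(i)\text{sign}_j(i') \cdot d_{i'} \cdot \mathbf{1}[h_j(i')=h_j(i)]$. Using that $\text{sign}_j$ is 2-wise independent with mean zero and that $h_j$ is pairwise independent with $\Pr[h_j(i')=h_j(i)]=1/k$, a direct moment computation gives that this estimator has mean $d_i$ and variance at most $\frac{1}{k}\sum_{i'\neq i} d_{i'}^2 \le \frac{\|D\|_2^2}{k}$. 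Chebyshev's inequality then yields
\begin{equation*}
\Pr\bigl[\,|\text{sign}_j(i)\cdot\bm{S}[j][h_j(i)] - d_i| > \mu\|D\|_2\,\bigr] \le \frac{1}{k\mu^2}.
\end{equation*}
Choosing $k = \lceil e/\mu^2\rceil$ makes this per-row failure probability at most $1/e < 1/2$.

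Second, I would amplify this constant-probability guarantee to high probability by averaging via the median over the $t$ independent rows. Let $Z_j$ be the indicator that row $j$ fails, i.e., that its estimate deviates from $d_i$ by more than $\mu\|D\|_2$. The $Z_j$ are independent (the hash families across rows are independent), each with $\mathbb{E}[Z_j] \le 1/e$. The median deviates from $d_i$ by more than $\mu\|D\|_2$ only if at least $t/2$ rows fail, so by a standard Chernoff bound,
\begin{equation*}
\Pr\!\left[\tfrac{1}{t}\textstyle\sum_j Z_j \ge \tfrac12\right] \le \exp(-c\, t)
\end{equation*}
for an absolute constant $c>0$ (coming from the KL-divergence gap between $1/2$ and $1/e$). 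Setting $t = \lceil c^{-1}\ln(1/\delta)\rceil = O(\ln(1/\delta))$ makes this bound at most $\delta$, giving the desired $1-\delta$ success probability.

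I do not anticipate a real obstacle here; the argument is entirely classical. The only mild care is in the constant $e$ in $k = O(e/\mu^2)$, which matters for keeping the per-row failure probability strictly below $1/2$ so that the Chernoff amplification step goes through with a clean $t = O(\ln(1/\delta))$. A secondary point to be careful about is that Chebyshev only needs the variance bound from pairwise independence, so the stated 2-wise independence of the sign hash and pairwise independence of $h_j$ are exactly what is used; nothing stronger (e.g., 4-wise independence) is required for this estimator.
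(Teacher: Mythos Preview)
Your argument is correct and is exactly the classical Count Sketch analysis: per-row unbiasedness and variance $\le \|D\|_2^2/k$ from the pairwise/2-wise hash assumptions, Chebyshev to get constant failure probability $1/e$ with $k=\lceil e/\mu^2\rceil$, then Chernoff on the median over $t=O(\ln(1/\delta))$ independent rows. The paper does not actually supply its own proof of this lemma---it is stated as a cited result from~\citet{CountSketch}---so there is nothing to compare against beyond noting that your write-up is the standard proof one would find in that reference.
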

Lemma~\ref{lemma:cs_error} states the error guarantee of Count Sketch, i.e., that with high probability, the recovered/estimated values are very close to the original value, with a small error up to a fraction $\mu$ of $L_2$ norm of the input vector. With this lemma, we can translate the bound of the input values to the bound of every value in the output sketching table, which will be used in our proof.
We provide the differential privacy guarantees of a basic Count Sketch $\M$ in our main theorem below.

\begin{theorem}[$\varepsilon$-differential privacy of Count Sketch]\label{theorem:main}
For a sketching algorithm $\mathcal{M}$ using Count Sketch ${\bm S_{t \times k}}$ with $t$ arrays of $k$ bins, for any input vector $D$ with length $n$ satisfying Assumption~\ref{assum:basic}, $\mathcal{M}$ achieves  $t\cdot \ln\left(1+\frac{\beta \alpha^2 k(k-1)}{\sigma^2(n-2)}\left(1+\ln(n-k)\right)\right)$-differential privacy with high probability, where $\beta$ is a positive constant satisfying $\frac{\alpha^2k(k-1)}{\sigma^2(n-2)}\left(1+\ln(n-k)\right) \leq \frac{1}{2}-\frac{1}{\beta}$.
\end{theorem}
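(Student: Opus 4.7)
My plan is to reduce the privacy of the whole sketch table to a per-row analysis via composition, and then to bound the privacy loss of a single row by exploiting the Gaussianity of the noise created by hash-collisions. Let $D_1, D_2$ differ in a single coordinate $i^*$, with $d_{i^*}, d'_{i^*}\in[-\alpha,\alpha]$ by Assumption~\ref{assum:basic}. Since Count Sketch uses $t$ independently drawn hash/sign pairs, the $t$ rows of the sketch table are independent mechanisms run on the same input, so by basic composition of differential privacy it suffices to prove $\varepsilon_0$-DP for one row with $\varepsilon_0=\ln\!\bigl(1+\tfrac{\beta\alpha^2 k(k-1)}{\sigma^2(n-2)}(1+\ln(n-k))\bigr)$, and then multiply by $t$ to obtain the claim.

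For a fixed row $j$, only the bin indexed by $h_j(i^*)$ has a distribution that differs between $D_1$ and $D_2$; the remaining $k-1$ bins have identical distributions on the two inputs and cancel from the density ratio. So the per-row privacy loss reduces to the one-dimensional ratio of densities of the single cell $C=\bm{S}[j][h_j(i^*)]$. By Assumption~\ref{assum:basic}, each $d_i$ with $i\neq i^*$ that hashes into this bin contributes a mean-zero Gaussian term of variance $\sigma^2$, independent of the others. Conditioning on there being $m$ collisions in the bin, $C=\text{sign}_j(i^*)\,d_{i^*}+N_m$ with $N_m\sim\mathcal{N}(0,m\sigma^2)$, so the conditional density of $C$ is Gaussian with variance $m\sigma^2$ and a mean that shifts by at most $2\alpha$ between $D_1$ and $D_2$.

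The next step is to average over the randomness in $m$ (induced by the hash function) and the output value $c$. Writing the unconditional density of $C$ as a mixture over $m$, I would (i) compute the Gaussian density ratio pointwise, obtaining an exponent that is linear in $c$ with slope at most $2\alpha/(m\sigma^2)$ plus a correction of order $\alpha^2/(m\sigma^2)$; (ii) restrict to the high-probability event $|c|\le O(\alpha)$ afforded by the boundedness of the inputs; (iii) lower-bound the typical collision count $m$ using the fact that the expected load of a random bin is $(n-1)/k$, with variance controlled by pairwise independence of the hashes; and (iv) repackage the resulting bound using the elementary inequality $\ln(1+x)\ge x-x^2/2$. The factor $k(k-1)/(n-2)$ should emerge from the probability that a specified second coordinate collides with $i^*$ in the same bin (i.e.\ the load law), while the $1+\ln(n-k)$ factor should come from a harmonic-style union bound over which of the remaining $n-1$ coordinates actually land in the bin.

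The main obstacle is that pure $\varepsilon$-DP is generally unattainable from Gaussian noise without truncation, so the whole argument must be qualified by a high-probability event --- precisely the ``with high probability'' caveat in the theorem. Choosing the truncation threshold so that (a) it holds with enough probability under both $D_1$ and $D_2$, (b) the sensitivity $2\alpha$ and the typical noise scale $\sqrt{m}\,\sigma$ interact to give the claimed $\ln(1+\cdot)$ bound, and (c) the averaging over the random load $m$ inflates the bound only by the logarithmic factor $1+\ln(n-k)$, will be the most delicate bookkeeping. The role of the positive constant $\beta$ in the theorem statement is to absorb the lower-order remainders from the $\ln(1+x)\ge x-x^2/2$ step, and its stated admissibility condition $\tfrac{\alpha^2k(k-1)}{\sigma^2(n-2)}(1+\ln(n-k))\le\tfrac12-\tfrac1\beta$ is precisely what keeps those remainders below $1/2$ so the expansion is valid.
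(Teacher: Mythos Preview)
Your reduction to a one–dimensional problem is where the argument breaks. You claim that the $k-1$ bins other than $h_j(i^*)$ ``have identical distributions on the two inputs and cancel from the density ratio.'' Their \emph{marginals} are indeed identical, but differential privacy requires bounding the ratio of \emph{joint} densities at every point $(b_1,\dots,b_k)$, and the bins are not independent: the joint density is a mixture over all allocations $(a_1,\dots,a_k)$ of the $n-1$ unchanged coordinates to the $k$ bins, and the mixture weights $P^{n-1}_{(a_1,\dots,a_k)}\prod_{\ell\neq h_j(i^*)}\phi_{a_\ell\sigma^2}(b_\ell)$ depend on all of $b_2,\dots,b_k$ and on the full allocation. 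These weights do not factor out of the ratio, so the analysis cannot collapse to the single cell $C$. Relatedly, your plan to lower-bound the collision count $m$ by concentration would at best control the ratio on a typical allocation; the worst allocation (small $a_{h_j(i^*)}$) still contributes to the mixture and cannot be discarded without paying for it elsewhere.

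The paper keeps the full $k$-dimensional joint and proceeds quite differently from your steps~(i)--(iv). It upper-bounds the numerator by replacing every Gaussian exponential by $1$, and lower-bounds the denominator via $e^{-x}\ge 1-x$; after a reweighting argument (their Lemma~\ref{lemma:tech_1}) this reduces to the purely combinatorial quantity $\binom{n-2}{k-1}^{-1}\sum_{(a_1,\dots,a_k)}\bigl(\tfrac{1}{a_1}+\cdots+\tfrac{1}{a_k}\bigr)$, summed over compositions with each $a_i\ge 1$. Exact counting of this sum is what produces both the $k(k-1)/(n-2)$ factor and the $1+\ln(n-k)$ factor, the latter coming from the harmonic sum $\sum_{j=1}^{n-k}1/j$ rather than from any union bound. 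Finally, $\beta$ does not enter through a $\ln(1+x)\ge x-x^2/2$ remainder as you suggest; it appears when converting $\frac{1}{1-2y}\le 1+\beta y$, which is exactly the inequality valid when $y\le \tfrac12-\tfrac{1}{\beta}$, matching the stated admissibility condition on $\beta$.
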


We defer readers to Appendix~\ref{app:privacy} for a detailed proof, and provide a high-level sketch here: Since the $t$ counter arrays are independent, we first  derive the output probability density functions of a single counter array.  Based on the sketching algorithm (Algorithm~\ref{alg:countsketch}) as well as the input distributions (Assumption~\ref{assum:basic}), we can show that the output distribution is a mixture of Gaussians. Since we assume uniformly bounded inputs, we can then prove that the difference of two output distributions before and after one input element changes is also bounded. The overall privacy guarantee is then multiplied by $t$ given the independence between  arrays.

To the best of our knowledge, we provide the first differential privacy proof for vanilla Count Sketch without additional randomization. From Theorem~\ref{theorem:main}, we see that the privacy parameter $\varepsilon$ will become smaller when the size of the input vector, $n$, gets smaller, and we can achieve nearly zero-differential privacy when $n$ is large enough. Theorem~\ref{theorem:main} also reveals  natural trade-offs between accuracy, communication, and privacy. For instance, if we use a smaller sketching table to compress the input vector (smaller $t$ or $k$), we are compressing more aggressively (losing more information) and  $\mathcal{M}$ will be more private with a smaller $\varepsilon$ parameter---potentially at the cost of overall accuracy.

\section{Proposed Framework: \name}\label{sec:algorithms}
In this section, we introduce \name, a general framework for distributed learning. We begin by describing \name's end-to-end approach for reducing communication and enforcing differential privacy using sketches (\S\ref{sec:algorithms:meta}). We then show how \name can be applied to both classical distributed machine learning, with distributed SGD (\S\ref{sec:algorithms:sgd}), and federated learning, with \fedavg (\S\ref{sec:algorithms:flearn}).

\subsection{\name: A Framework for Distributed Learning}\label{sec:algorithms:meta}
\name is an efficient and private framework for distributed learning. On top of the existing distributed  methods, \name uses sketches to both preserve the privacy and reduce the size of transmitted messages with small overhead. 
In particular, many distributed optimization methods perform work in parallel (e.g., computing gradients), and then aggregate local information from distributed entities by calculating a mean (e.g., computing the average mini-batch of gradients). These tasks can be completed using a sketch through the \emph{Aggregation} and \emph{Query} steps.   
As depicted in Algorithm~\ref{alg:diffsketch}, \name has two additional simple operations: \emph{Compression} and \emph{Validation}. 

The complete operations in \name are as follows:
   \emph{Compression:} At each round,  
   entity $k$ uses sketches to compress the local information $v$ into a privatized $\bm S(v_k)$.
   \emph{Validation:} Once the compression is done, each local worker needs to verify if sketching alone satisfies $\varepsilon$-differential privacy (based on Theorem~\ref{theorem:main}). If not, we may need to ensure a consistent privacy guarantee with some additional noise. As sketches have provable privacy, zero or a small amount of noise is expected (we verify this empirically in \S\ref{sec:eval}).
   \emph{Aggregation:} The central server aggregates the local information and sends the aggregated (i.e., averaged) local updates back.
   \emph{Query:} Each local entity approximately recovers the updated global information from the (merged) sketch.

\begin{algorithm}[t]
        \begin{algorithmic}[1]
    	    \STATE {\bf Input:}  $T$, $v_1, \cdots, v_m$, $\varepsilon$
    	    \FOR  {$t=0, \cdots, T-1$}
		    \STATE {\bf Compression:} Each entity $k$ compresses $v_k$ to obtain $\bm{S}(v_k)$ based on Algorithm~\ref{alg:countsketch}
		    \STATE {\bf Validation:} Each entity validates if it satisfies  $\varepsilon$-differential privacy for a given $\varepsilon$. If not, add appropriate Laplacian or Gaussian noise to $\bm{S}(v_k)$ %\vspace{-4mm}
		    \STATE {\bf Aggregation:} The server aggregates local information to obtain $\bm{S}(v)=\frac{1}{m}(\bm{S}_1 + \cdots + \bm{S}_m)$ %\vspace{-4mm}
		    \STATE {\bf Query:} Each entity queries $\bm{S}(v)$ for the mean $\tilde{v}$ based on Algorithm~\ref{alg:countsketch}
		    \ENDFOR
	  \end{algorithmic}
	  \caption{Proposed framework: \name.}\label{alg:diffsketch}
\end{algorithm}

As mentioned in the related work (\S\ref{sec:related_work:privacy}) and preliminaries (\S\ref{sec:analysis:preli}), local differential privacy does not assume a trusted central server, and the server cannot see individual inputs. Since \name allows each worker to sketch the vectors locally before sending the compressed sketching table to the server, we are privatizing data at a local level, thus protecting against third-parties including the central server. 

\paraf{Generality.} We note that \name is a general framework for a variety of distributed learning scenarios where user data needs to be protected. In this paper, we describe the applications to distributed SGD (\S\ref{sec:algorithms:sgd}) and federated learning (\S\ref{sec:algorithms:flearn}). It can also be potentially applied to broader distributed data analysis tasks with privacy concerns; the main requirement is simply that the workload is aggregating information across the network via calculation of a mean of distributed vectors.  
\name's modular design has two major advantages: First, the provable accuracy guarantees and differential privacy for the compressed data help any analysis algorithms to retain high accuracy. Second, the simple hashing-based computations in applying the sketches to Compression and Query are light-weight operations. One can leverage efficient CPU parallelism or hardware acceleration (e.g., SSE and AVX) to achieve optimized computation performance.

\subsection{\name for Distributed SGD} \label{sec:algorithms:sgd}
We first explore \name with distributed (mini-batch) SGD as a subroutine. Throughout the paper, we assume the widely-used objective of minimizing the finite sum of the empirical loss:
\begin{align}
    \min_w F(w) = \sum_{k=1}^m p_k F_k(w) \, , \label{eq:obj}
\end{align}
where $F_k$ is the local empirical loss on worker $k$, $m$ is the total number of workers, and $p_k$ is the weight set for worker $k$ (e.g., $\frac{1}{m}$). We consider possibly differing local data distributions across the network and possibly non-convex $F_k$'s. In this work, we assume a classical synchronous and centralized training setup with $m$ workers connected to one central server. 

A typical workflow of using distributed SGD to solve \eqref{eq:obj} is that at each updating round, each local worker computes gradients using (a mini-batch of) local data and sends the gradients to the server, where they get merged and sent back to the workers. We can directly apply Algorithm~\ref{alg:diffsketch} to compress the gradients ${g}$'s sent from workers to the server. The server can not see the raw gradients as they have been compressed and masked into small sketching tables. 
See Algorithm \ref{alg:sgd_sketch} for more details. As sketches are guaranteed to preserve the original skewed gradient values with high probabilities, the local workers can recover the merged gradients with very high accuracies. In our experiments (\S\ref{sec:eval}), we demonstrate that the accuracy reduction is very minor while the compression ratio is high (up to $50\times$).

\paraf{Convergence.} We also provide convergence guarantees for \name with distributed SGD for convex functions. 
Our convergence results rely on the bounded estimation error of Count Sketch (Lemma~\ref{lemma:cs_error}), and unbiasedness of Count Sketch~\citep{CountSketch}: For a Count Sketch $\mathcal{M}$, for any input element $d_i \in D$ and query $\mathcal{Q}$, $\mathbb{E}\left[\mathcal{Q}(\mathcal{M}(d_i)\right] = d_i.$
Together with Lemma~\ref{lemma:cs_error}, this 
guarantees that the estimated gradients are both unbiased and uniformly bounded with high probability.

\begin{theorem}[Convergence of \name in distributed SGD]\label{theorem:fl_convergence}
Assume that $E\left[\|g_k\|^2\right] \leq G^2$ for any input stochastic gradient $g_k$ on device $k$ with dimension $n$, $E\left[\|x_i - x^*\|^2\right] \leq D^2$ at any iteration $i$, and the local model at device $k$ $f_k(x)$ is convex. Choose the step-size at round $i$  $\eta_i=\frac{c}{\sqrt{i}}$ where $c$ is a pre-defined positive number. Using a Count Sketch with $t$ hash functions and $k$ bins, with probability ($1-\delta$), we have:
\begin{equation*}
    F(\bar{x}_i)-F(x^*) \leq \frac{\frac{D^2}{2c}+c\sqrt{\frac{i+1}{i}}\left(n\mu^2+1\right)G^2}{\sqrt{i}},
\end{equation*}
where $x^*$ is the optimal solution to \eqref{eq:obj}, $k=O\left(\frac{e}{\mu^2}\right)$, $t=O\left(\ln(\frac{1}{\delta})\right)$, and $\bar{x}_i$ = $\frac{1}{i} \sum_{j=1}^i x_j$.
\end{theorem}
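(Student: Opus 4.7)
The plan is to adapt the textbook convex stochastic subgradient convergence proof to the sketched aggregated gradient, treating $\tilde{g}_i$ as a noisy but unbiased estimator of $\nabla F(x_i)$ whose second moment is inflated by a factor related to the sketch compression.

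First I would establish the two relevant properties of $\tilde{g}_i$ at each iteration. By the unbiasedness of Count Sketch stated in the paragraph preceding the theorem, together with independence of the stochastic sampling on each worker, $\mathbb{E}[\tilde{g}_i \mid x_i] = \nabla F(x_i)$. By Lemma~\ref{lemma:cs_error} applied coordinate-wise (with a union bound absorbed into the choice of $t = O(\ln(1/\delta))$), each coordinate $\ell$ satisfies $|\tilde{g}_{i,\ell} - g_{i,\ell}| \leq \mu \|g_i\|_2$ on a high-probability event, so $\|\tilde{g}_i - g_i\|_2^2 \leq n \mu^2 \|g_i\|_2^2$. Combining this with $\mathbb{E}[\|g_i\|^2] \leq G^2$ and the Pythagorean identity $\mathbb{E}[\|\tilde{g}_i\|^2] = \mathbb{E}[\|g_i\|^2] + \mathbb{E}[\|\tilde{g}_i - g_i\|^2]$ (which uses unbiasedness to kill the cross term) yields $\mathbb{E}[\|\tilde{g}_i\|_2^2] \leq (1 + n\mu^2)G^2$.

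Next I would invoke the standard one-step descent identity for the SGD update $x_{i+1} = x_i - \eta_i \tilde{g}_i$:
\[
\|x_{i+1} - x^*\|^2 = \|x_i - x^*\|^2 - 2\eta_i \langle \tilde{g}_i, x_i - x^*\rangle + \eta_i^2 \|\tilde{g}_i\|^2.
\]
Taking conditional expectations, substituting $\mathbb{E}[\tilde{g}_i \mid x_i] = \nabla F(x_i)$ in the inner-product term, and applying convexity via $\langle \nabla F(x_i), x_i - x^*\rangle \geq F(x_i) - F(x^*)$ gives the per-step bound
\[
2\eta_i\bigl(\mathbb{E}[F(x_i)] - F(x^*)\bigr) \leq \mathbb{E}\bigl[\|x_i - x^*\|^2 - \|x_{i+1} - x^*\|^2\bigr] + \eta_i^2 (1 + n\mu^2) G^2.
\]
I would then telescope from $j = 1$ to $i$ (using $\mathbb{E}[\|x_j - x^*\|^2] \leq D^2$ to bound the residual), apply Jensen's inequality via $F(\bar{x}_i) \leq \tfrac{1}{i}\sum_{j=1}^i F(x_j)$, use the monotonicity $\eta_j \geq \eta_i = c/\sqrt{i}$ to pull a common factor out of the left-hand side, and plug in standard integral estimates for $\sum_{j=1}^i 1/j$ and $\sum_{j=1}^i 1/\sqrt{j}$ to arrive at the stated bound, with the $\sqrt{(i+1)/i}$ coefficient emerging from the partial-sum tail estimates.

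The main obstacle, in my view, is the careful bookkeeping around the high-probability event on which the sketch error bounds hold. Lemma~\ref{lemma:cs_error} gives per-coordinate, per-query guarantees with failure probability $\delta$; invoking it across $n$ coordinates and $i$ iterations naively inflates the failure probability, and one must either increase $t$ logarithmically (hidden inside the $O(\ln(1/\delta))$ convention) or rescale $\delta$ accordingly. A secondary subtlety is that unbiasedness of Count Sketch holds in the full expectation over the hash randomness, whereas the per-coordinate error bound is a conditional high-probability statement; rigorously combining these in the one-step descent inequality requires either decoupling the two sources of randomness or absorbing a small additive term from the low-probability event into the constants. Once those measurability/union-bound issues are handled, the remaining algebra that produces the precise $\sqrt{(i+1)/i}$ coefficient should follow routinely from the harmonic-type partial sums.
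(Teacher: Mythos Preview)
Your outline is essentially the paper's proof: bound the second moment of the sketched gradient via Lemma~\ref{lemma:cs_error}, use unbiasedness to replace the inner product by $\langle\nabla F(x_i),x_i-x^*\rangle$, apply convexity, telescope, and finish with Jensen. Two small points of divergence are worth flagging. First, where you invoke the Pythagorean identity to get $\mathbb{E}[\|\tilde g\|^2]\le(1+n\mu^2)G^2$, the paper instead uses the cruder $\|a+b\|^2\le 2\|a\|^2+2\|b\|^2$, picking up an extra factor of $2$; your route is tighter and in fact matches the stated constant more cleanly. Second, the paper obtains the $\sqrt{(i+1)/i}$ coefficient by dividing each one-step inequality by $\eta_j$ \emph{before} summing, so that the distance terms telescope against $\bigl(\tfrac{1}{\eta_j}-\tfrac{1}{\eta_{j-1}}\bigr)D^2$ and collapse to $\tfrac{\sqrt{i}}{c}D^2$, while the gradient-norm terms contribute $\sum_j \eta_j\approx 2c\sqrt{i+1}$; your sketched alternative of summing first and then using $\eta_j\ge\eta_i$ together with $\sum_j \eta_j^2 = c^2\sum_j 1/j$ would instead introduce a $\log i$ factor and not recover the stated bound, so for the exact constant you should follow the divide-then-sum telescoping.
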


For simplicity, we abuse notations and use $g_k \in \mathbb{R}^n$ to denote the gradient vector on device $k$. We provide a detailed proof in Appendix~\ref{app:convergence}. Theorem~\ref{theorem:fl_convergence} indicates that \name has the same convergence rate as standard distributed SGD~\citep{ghadimi2013stochastic} under convex settings. It also indicates that as the number of hash functions $t$ and the number of bins $k$ increase, we compress less and tend to get a tighter convergence bound (due to a smaller recovery error $\mu$) with a higher probability (lower $\delta$).

\subsection{\name for Federated Learning}\label{sec:algorithms:flearn}

Federated learning aims to fit a model to data generated by, and residing on, networks of hundreds to  millions of remote devices~\citep{mcmahan2016FedAvg}. Applying \name must carefully consider unique challenges associated with this setting---the large scale of the networks,   expensive communication, strict privacy requirements, and a high degree of heterogeneity across devices~\citep{li2019federated}. In practice, it is common that only a small fraction of devices are active at each round~\citep{bonawitz2019towards}. 
Optimization methods using local updating and tolerating low participation of devices have become the de facto solvers for federated settings; of these, \fedavg is most widely-used~\citep{mcmahan2016FedAvg}. 

\name can use \fedavg as a subroutine and similarly account for important characteristics in federated learning. At each communication round, it randomly samples a subset of devices, lets each participating device perform $E$ epochs of local updates, applies Count Sketch to compress the updates, and  averages the updates centrally.  Details are summarized in Algorithm~\ref{alg:flearn_sketch}. 
Empirically, we demonstrate that \name can compress communication by up to $20\times$ and provide strong local privacy guarantees ($\varepsilon=1$) on real federated datasets (\S\ref{sec:eval}).

\begin{algorithm}[t]
        \begin{algorithmic}[1]
    	    \STATE {\bf Input:}  $T$, $\eta$, $w^0$, $\varepsilon$
	        \FOR  {$t=0, \cdots, T-1$}
		        \IF {$t>0$}
		        \STATE Server sends the sketched global gradient $\bm{S}(g^t)$ to all workers
		        \STATE Each worker queries $\bm{S}(g^t)$ for $\tilde{g}^t$
		        \STATE Each worker updates: $w^t = w^{t-1} - \eta \tilde{g}^t$
		        \ENDIF
		        \STATE Each worker $k$ runs (mini-batch) SGD on $w^t$ to obtain local gradients ${g}_k^{t+1}$
		        \STATE Each worker sketches the gradients locally to obtain $\bm{S}({g}_k^{t+1})$
		        \STATE Each worker adds additional Laplacian noise to $\bm{S}({g}_k^{t+1})$ if not satisfying $\varepsilon$-differential privacy
		        \STATE Each worker sends $\bm{S}({g}_k^{t+1})$ to the server
		        \STATE Server aggregates the model updates:  { $\bm{S}(g^{t+1}) = \frac{1}{m}\sum_{k=1}^m \bm{S}(g_k^{t+1})$}
	    \ENDFOR
	  \end{algorithmic}
	  \caption{\name with distributed SGD.}\label{alg:sgd_sketch}
\end{algorithm}

\begin{algorithm}[t]
        \begin{algorithmic}[1]
    	    \STATE {\bf Input:}  $K$, $T$, $\eta$, $E$, $w^0$,  $p_k$, $\varepsilon$
	        \FOR  {$t=0, \cdots, T-1$}
		        \STATE Server samples a subset $S_t$ of $K$ devices  (each device is chosen with probability $p_k$)
		        \IF {$t>0$}
		        \STATE Server sends the sketched global model $\bm{S}(\Delta w^t)$ to all chosen devices
		        \STATE Each device $k$ queries $\bm{S}(\Delta w^t)$ for $\Delta \tilde{w}^{t}$
		        \STATE Each device $k$ updates: $w^t = w^{t-1} + \Delta w^t$
		        \ENDIF
		       \STATE Each device $k$ updates $w^t$ for $E$ epochs of SGD on $F_k$ with step-size $\eta$ to obtain $\Delta w_k^{t+1}$
		        \STATE Each device $k$ sketches the updates locally to obtain $\bm{S}(\Delta w_k^{t+1})$
		         \STATE Each device $k$ adds additional Laplacian noise to $\bm{S}(\Delta w_k^{t+1})$ if not satisfying $\varepsilon$-differential privacy
		        \STATE Each device $k$ sends $\bm{S}(\Delta w_k^{t+1})$ to the server
		        \STATE Server aggregates the model updates: { $\bm{S}(\Delta w^{t+1}) = \frac{1}{K}\sum_{k \in S_t} \bm{S}(\Delta w_k^{t+1})$}
	    \ENDFOR
	  \end{algorithmic}
	  \caption{\name in federated learning.}\label{alg:flearn_sketch}
\end{algorithm}

We note that in federated settings with non-identically distributed data, \fedavg is a heuristic and may not converge despite its overall robust practical performance~\citep{mcmahan2016FedAvg,li2018federated}. Therefore, we also do not provide convergence guarantees for \name in this setting, though we explore the method empirically in \S\ref{sec:eval}.

\section{Evaluation}\label{sec:eval}

We now present empirical results for the \name framework. 
In \S\ref{sec:eval:compare}, we compare \name with other baselines that can reduce communication and preserve privacy simultaneously, and demonstrate the superior performance of \name. In \S\ref{sec:eval:tradeoff}, we investigate the trade-off between privacy, communication, and accuracy in \name. 
All code, data, and experiments are publicly available at \href{https://github.com/litian96/DiffSketch}{\texttt{github.com/litian96/DiffSketch}}.

\subsection{Simulation Setups}\label{sec:eval:setup}
{\bf Datasets.} For distributed SGD, we randomly subsample from the MNIST dataset for image classification 
and form 10 partitions with identical distributions across local workers. For federated learning, we use the Shakespeare dataset, which is a popular dataset curated from federated learning benchmarks~\citep{caldas2018leaf}. It is naturally partitioned in a heterogeneous way where each device is associated with a speaking role in the plays. To investigate a setup that is favorable to \cpsgd~\citep{agarwal2018cpsgd}, we partition MNIST into 6,000 subsets with one subset corresponding to a device,  
as we observe that the privacy bound in \cpsgd becomes tighter (better) if sampling from a larger number of devices. Data statistics are summarized in Table~\ref{table: data}, Appendix~\ref{app:eval}.

{\bf Implementation.} We implement Algorithm~\ref{alg:sgd_sketch} and \ref{alg:flearn_sketch} in Tensorflow~\citep{abadi2016tensorflow}, and simulate a one server and $m$ workers setup (see Table~\ref{table: data} in Appendix~\ref{app:eval}). To further improve the performance of sketches, we consider several simple implementation optimizations as follows. In distributed SGD, each local worker compares the queried results with the local gradients, and sets half of the gradients with inaccurate estimations (larger gaps) to zero. 
For both distributed SGD and federated learning, we generate some random noise drawn from the same Gaussian distribution as the original gradients (or model updates), and append the noise vector  after the input. By increasing the size of the input and thus compressing more values, we are boosting the privacy of sketches. All $\varepsilon$ values reported in the following experiments are local privacy guarantees---each local worker achieves $\varepsilon$-differential privacy at each round. 
See full details in Appendix~\ref{app:eval:implementaion}.

\subsection{Comparison with Baselines}\label{sec:eval:compare}
In this section, we compare with several baseline methods that address both communication and privacy, including  \cpsgd~\citep{agarwal2018cpsgd}, a state-of-the-art method that aims optimize both aspects (though it treats each separately). As secure multi-party computation (SMC) is generally inefficient in distributed learning (as discussed in \S\ref{sec:related_work}), we do not compare with methods based on SMC. 

\begin{figure}[h]
    \centering
    \includegraphics[width=0.55\textwidth]{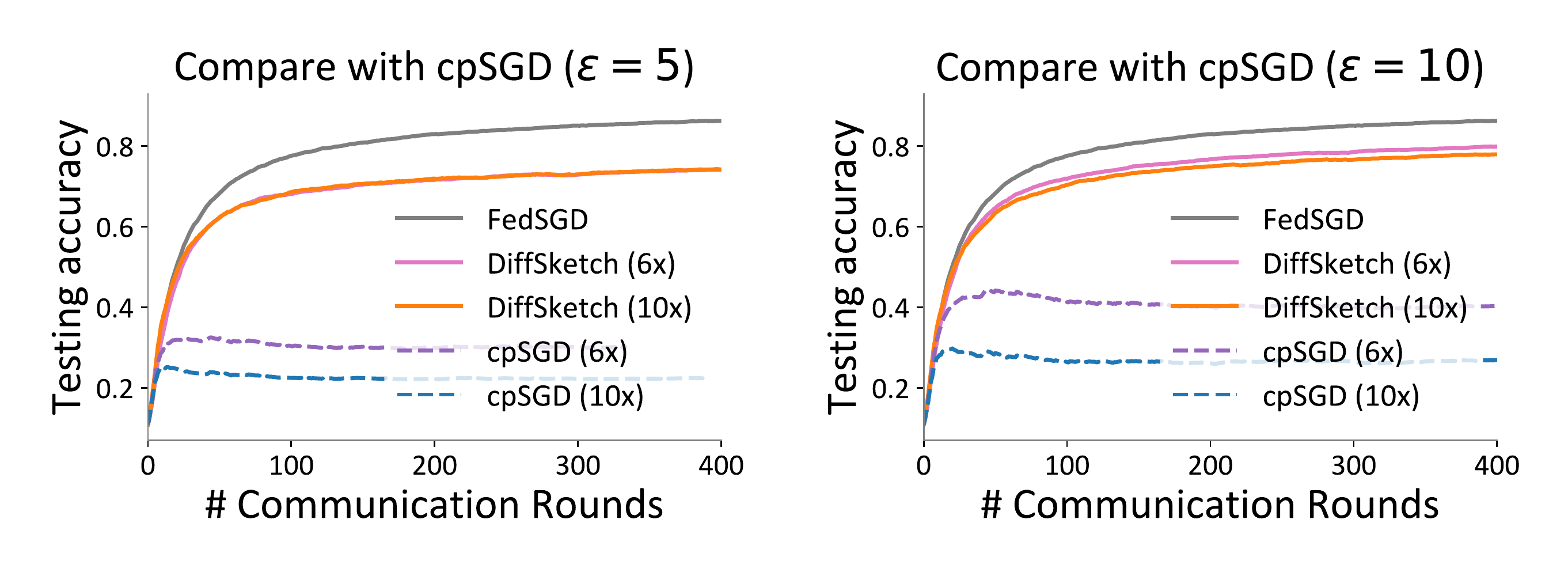}
    \caption{\name compared with \cpsgd. Under the same privacy guarantees ($\varepsilon=5$ or 10) and the same communication compression ratio, \name achieves significantly higher accuracy. If we continue to decrease $\varepsilon$ to improve the privacy, \cpsgd will perform even worse.}
    \label{fig:compare_cpsgd}
\end{figure}

{\bf cpSGD.} 
\cpsgd~\citep{agarwal2018cpsgd} first quantizes the gradients to reduce communication, then adds Binomial noise to the quantized gradients to offer differential privacy.
We compare \name with \cpsgd on MNIST*. Results are shown in Figure~\ref{fig:compare_cpsgd}. We see that across different $\varepsilon$ values and compression ratios, by considering privacy and communication jointly, \name achieves significant improvements of up to 50\% in \textit{absolute} test accuracy over \cpsgd. 

\begin{figure}[h]
	\centering
	\includegraphics[width=1\textwidth]{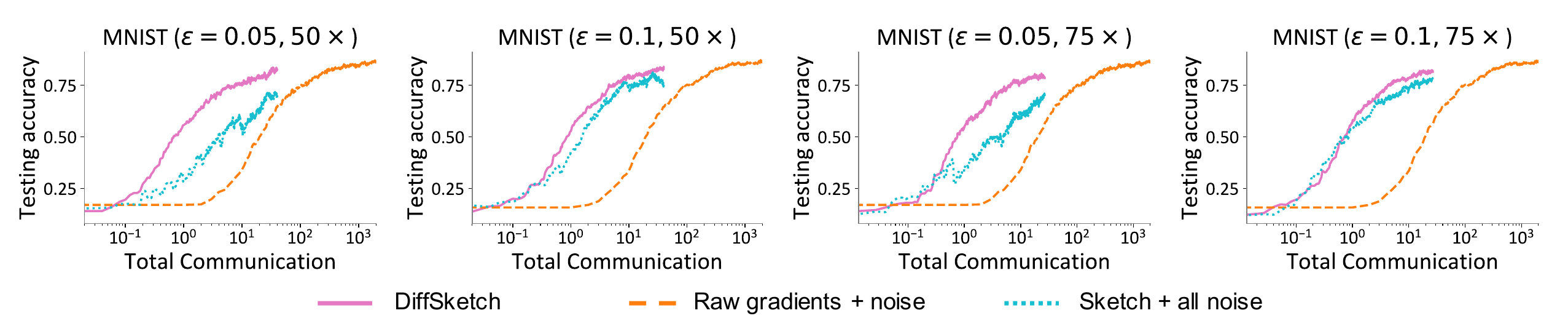}
	\caption{
	\name compared with other baselines in the distributed SGD setting. We show the test accuracy versus total communication in log scale. The total amount of communication is normalized via dividing the communication rounds by the compression ratio.  Given the same amount of privacy, (1) \name converges with orders-of-magnitude less communication than directly adding noise to the gradients (orange line), and (2) \name is more accurate than treating sketches as plain-text and directly adding noise to sketches at all communication rounds (blue line). This indicates that \name is inherently private such that less (or zero) additional noise is sufficient to provide the same privacy guarantees. The sketch sizes of 50$\times$ and 75$\times$ compression ratios are $7 \times 22$ and $7 \times 15$, respectively.}
	\label{fig:mnist}
\end{figure}

\begin{figure}[h]
	\centering
	\includegraphics[width=1\textwidth]{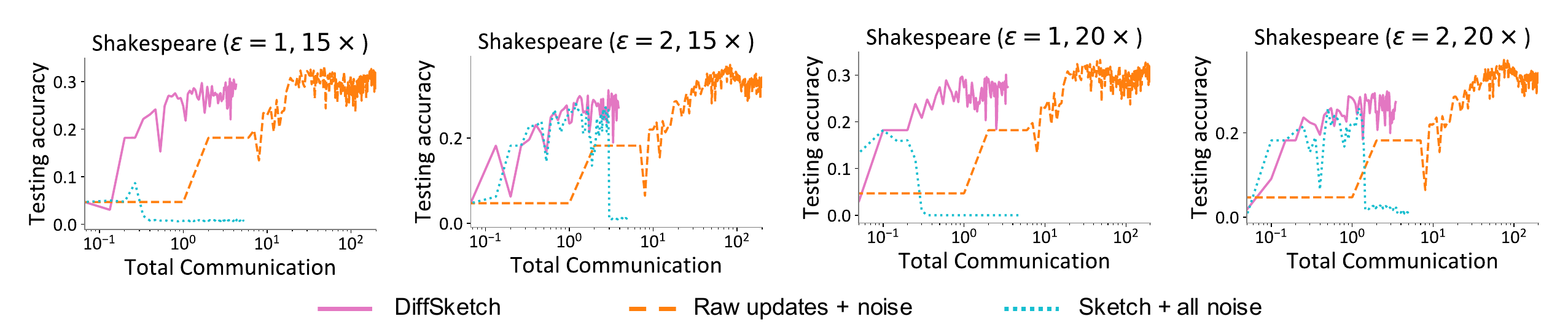}
	\caption{\name compared with other baselines in federated learning. Again, we note here that the total amount of communication is in log scale. Similar to Figure~\ref{fig:mnist}, it indicates that \name can significantly improve test accuracy while offering the same privacy and communication benefits. We observe that smaller $\varepsilon$ values would cause baseline methods to diverge, and so we investigate slightly larger $\varepsilon$'s. The sketch sizes of 15$\times$ and 20$\times$ compression ratios are $10 \times 190$ and $10 \times 245$. }
	\label{fig:shakespeare}
\end{figure}

{\bf Other baselines.} We next compare with two natural alternatives that optimize for privacy in practice: (1) directly adding Laplacian noise to the raw gradients transmitted between the workers and the central server to offer (local) differential privacy, and (2) treating sketches as plain-text, and adding Laplacian noise after sketching at all updating rounds to offer both privacy and compression~\citep{MelisDC16}. The results are shown in Figure~\ref{fig:mnist} (for distributed SGD) and Figure~\ref{fig:shakespeare} (for federated learning). In both settings, we see that \name converges faster than adding noise to raw gradients as the amount of communication is significantly reduced by sketching. Also, \name is more accurate than adding additional noise to sketches at all iterations, especially when $\varepsilon$ is smaller. This is because in order to guarantee stronger privacy with smaller $\varepsilon$'s, the baseline approach (in blue) needs to add more noise on top of sketches, thus hurting model performance, while \name only adds a small amount of noise when necessary (Figure~\ref{fig:eps} in the appendix).

\subsection{Trade-offs in \name}\label{sec:eval:tradeoff}

Finally, we explore trade-offs in \name between communication, privacy, and accuracy.

{\bf Accuracy vs.\   compression.} We plot the testing accuracy with the total amount of communication under different compression ratios for both distributed SGD and federated learning in Figure~\ref{fig:accu_compression}. As the compression ratio increases, we obtain faster convergence with less communication, but have potentially lower final accuracy. We note that many existing works in model compression explore techniques for improving accuracy under high compression ratios~\citep[e.g.,][]{konevcny2016federated,lin2017deep}; although outside the focus of this work, these techniques could be combined with our work to further boost the compression ratios.

{\bf Privacy vs.\   communication.} We show how compression relates to differential privacy in Figure~\ref{fig:privacy_communication}. The $\varepsilon$ values are averaged across all communication rounds. In our experiments, sketches themselves are sufficient to provide privacy benefits without additional noise in most communication rounds. We can see that a higher compression ratio leads to more privacy (smaller $\varepsilon$). 

\begin{figure}[h]
    \centering
    \includegraphics[width=0.6\textwidth]{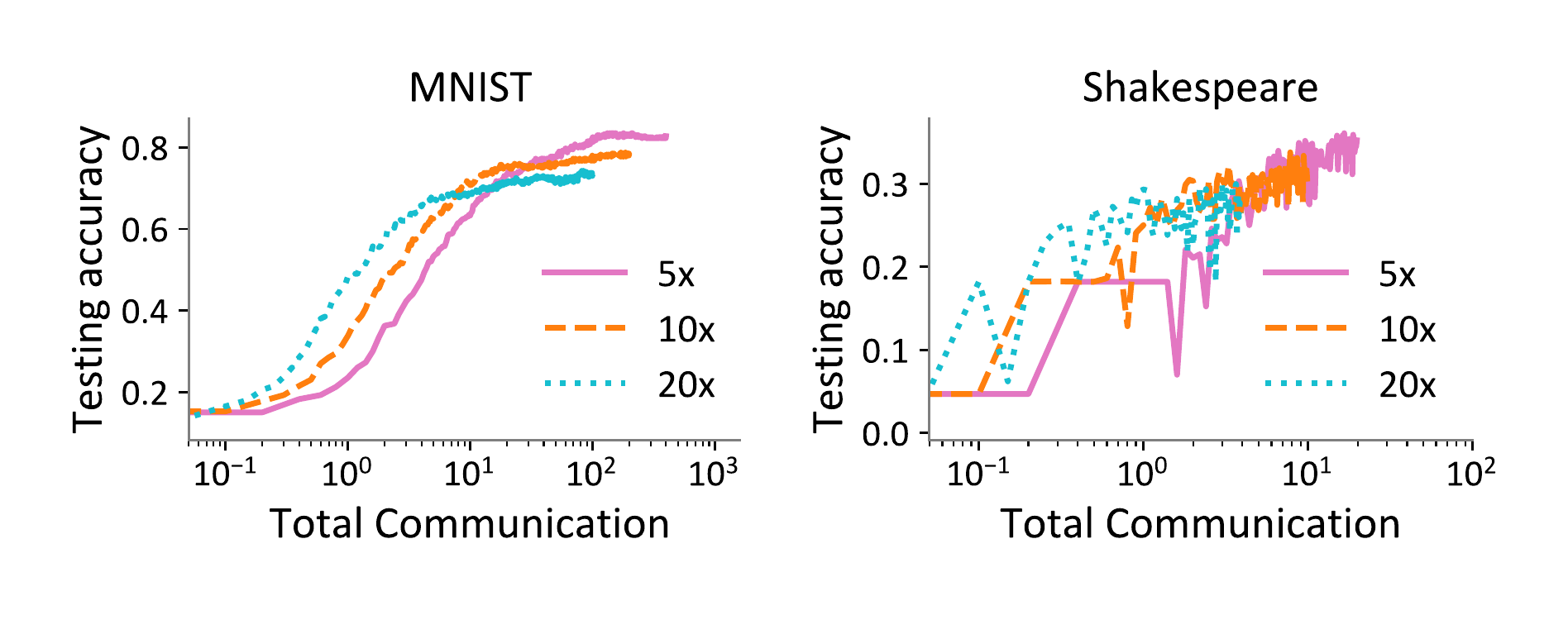}
    \caption{ As the compression ratio becomes higher, \name converges faster, but with potentially lower accuracies.}
    \vspace{-4mm}
    \label{fig:accu_compression}
\end{figure}
\vspace{-2mm}
\begin{figure}[h]
    \centering
    \includegraphics[width=0.55\textwidth]{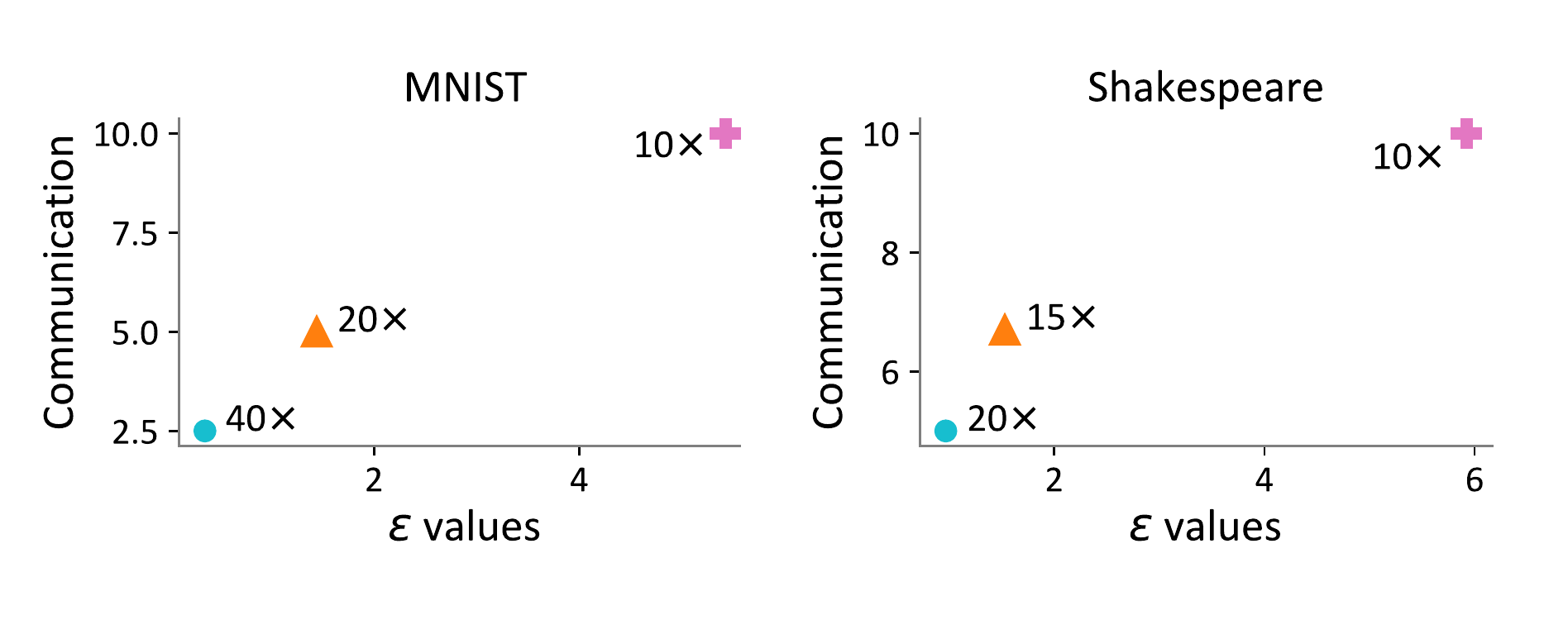}
    \vspace{-4mm}
    \caption{As we compress more, we can reduce communication and obtain stronger privacy guarantees with smaller $\varepsilon$ values. }
    \label{fig:privacy_communication}
    \vspace{-2mm}
\end{figure}

\section{Conclusion}\label{sec:conclusion}\vspace{-2mm}
In this work, we examined connections between communication reduction and privacy preservation in distributed machine learning. 
We first proved that canonical sketches (Count Sketch) have inherent differential privacy benefits without additional mechanisms. Based on our theoretical understandings, we designed \name, a framework for efficient and private distributed learning. We  applied \name to  classical distributed SGD and federated learning, and demonstrated empirically that \name achieves 5\%-50\% absolute accuracy improvements compared with baselines while offering the same communication and privacy benefits. 
While we explored such connections via sketches, a natural direction of future work is to investigate whether other tools from the privacy or distributed learning communities may similarly provide benefits for both privacy and communication simultaneously.

\section*{Acknowledgement}
We thank  Jalaj Upadhyay and Weizhao Tang for their helpful discussions.
This work was supported in part by the National Science Foundation grant IIS1838017, CNS-1700521, CNS-1565343, a Google
Faculty Award, a Carnegie Bosch Institute Research Award, Intel Labs University Research Office, and the CONIX Research Center. Any opinions,
findings, and conclusions or recommendations expressed in this material are those of the author(s) and do
not necessarily reflect the National Science Foundation or any other funding agency.

\bibliographystyle{abbrvnat}
\bibliography{ref,alan}

\newpage
\onecolumn
\appendix
\section{Proof for Theorem~\ref{theorem:main}}\label{app:privacy}

{\bf Theorem}: ($\varepsilon$-differential privacy of Count Sketch)
For a sketching algorithm $\mathcal{M}$ using Count Sketch ${\bm S_{t \times k}}$ with $t$ arrays of $k$ bins, for any input vector $D$ with length $n$ satisfying Assumption~\ref{assum:basic}, $\mathcal{M}$ achieves  $t \ln\left(1+\frac{\beta \alpha^2 k(k-1)}{\sigma^2(n-2)}\left(1+\ln(n-k)\right)\right)$-differential privacy with high probability, where $\beta$ is a positive constant satisfying $\frac{\alpha^2k(k-1)}{\sigma^2(n-2)}\left(1+\ln(n-k)\right) \leq \frac{1}{2}-\frac{1}{\beta}$.

{\bf Notations.} We first list the notations that will be used throughout the proof.
\begin{itemize}
    \item Input vector: $D=\{d_1, \cdots, d_n\},~|D|=n$
    \item The constructed data structure: $\bm S$ 
    \item Number of bins in one counter array (i.e., one row of $\bm S$): $k$
    \item Number of hash functions (number of counter arrays): $t$
    \item Random variables of the bin values in one counter array: $B_1, \cdots, B_k$
    \item Bin values in one counter array: $b_1, \cdots, b_k$
    \item Random variables of numbers of elements in $D$ mapped to $k$ bins: $A_1, \cdots, A_k$
    \item Numbers of elements in $D$ mapped to $k$ bins: $a_1, \cdots, a_k$
\end{itemize}

We state one lemma that will  be used in the proof for Theorem~\ref{theorem:main}.
\begin{lemma}\label{lemma:tech_1}
Given $\sum_{i=1}^k a_i=n-1$, $a_i \in \mathbb{Z}^+$, $a_1!\cdot a_2!\cdots a_k! \cdot \sqrt{a_1\cdot a_2 \cdots a_k}$ is minimized when $\max\{|a_i-a_j|, 1\leq i,j \leq k\} \leq 1$; also $\frac{1}{a_1}+\cdots+\frac{1}{a_k}$ is minimized when $\max\{|a_i-a_j|, 1\leq i,j \leq k\} \leq 1$.\label{eq}
\end{lemma}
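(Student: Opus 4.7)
The plan is to prove both minimization claims by a smoothing (pairwise exchange) argument: show that whenever some pair $a_i, a_j$ satisfies $a_i \geq a_j + 2$, replacing $(a_i, a_j)$ by $(a_i - 1, a_j + 1)$ (which preserves the sum constraint and keeps entries in $\mathbb{Z}^+$) strictly decreases the objective. Since any starting tuple can only admit finitely many such swaps before every pair differs by at most 1, the minimum must be attained at a configuration with $\max_{i,j}|a_i-a_j|\leq 1$, which moreover is unique up to permutation. I would handle the two objectives in sequence, doing the easier reciprocal sum first as a warm-up.

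For the reciprocal sum $\sum_i 1/a_i$, the swap changes the value by
\begin{equation*}
\left(\tfrac{1}{a_i-1}+\tfrac{1}{a_j+1}\right)-\left(\tfrac{1}{a_i}+\tfrac{1}{a_j}\right)=\tfrac{1}{a_i(a_i-1)}-\tfrac{1}{a_j(a_j+1)}.
\end{equation*}
Since $a_i-1 \geq a_j+1 > a_j$, we have $a_i(a_i-1) > a_j(a_j+1)$, so the difference is strictly negative. This immediately yields the second minimization claim.

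For the factorial times square-root product, write $a=a_i$, $b=a_j$ with $a \geq b+2$; the swap changes only the two corresponding factors, and the multiplicative ratio of new to old is
\begin{equation*}
\frac{(a-1)!\,(b+1)!}{a!\,b!}\cdot\sqrt{\tfrac{(a-1)(b+1)}{ab}}=\frac{b+1}{a}\cdot\sqrt{\tfrac{(a-1)(b+1)}{ab}}.
\end{equation*}
Squaring, the ratio is strictly less than $1$ iff $(b+1)^3(a-1) < a^3 b$. I plan to verify this by bounding $\tfrac{a-1}{a^3}\leq \tfrac{1}{(b+2)^2}$ (which holds because $a\mapsto (a-1)/a^3$ is decreasing for $a\geq 2$ and $a\geq b+2$), reducing the claim to $(b+1)^3 < b(b+2)^2$. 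Expanding gives $b^3+3b^2+3b+1 < b^3+4b^2+4b$, i.e.\ $b^2+b-1 > 0$, which holds for every $b\geq 1$. This shows the factorial decrease strictly dominates the square-root increase on every allowed swap.

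The main obstacle is the factorial-versus-square-root tension: naively the square-root factor $\sqrt{(a-1)(b+1)/(ab)}$ exceeds $1$, so one has to show the factorial ratio $(b+1)/a \leq (b+1)/(b+2)$ is small enough to compensate. The bound above is tight enough, but I would want to double-check the edge cases ($b=1$ and $a=b+2$) by direct substitution to make sure no off-by-one issue arises; both cases reduce to elementary inequalities on small integers. With both swap arguments in hand, the lemma follows, and the unique (up to permutation) minimizer consists of values that differ by at most one, as required for the Count Sketch bin-count calculation in Theorem~\ref{theorem:main}.
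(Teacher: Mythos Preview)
Your proposal is correct and follows essentially the same pairwise-smoothing argument as the paper: both show that whenever two entries differ by at least $2$, moving one unit from the larger to the smaller strictly decreases the objective, and iterate until all entries differ by at most $1$. Your algebraic route for the factorial part (bounding via the monotonicity of $(a-1)/a^3$) differs cosmetically from the paper's direct polynomial expansion after substituting $a_j=a_i+k$, and you spell out the reciprocal-sum case that the paper only asserts ``similarly,'' but the strategy is the same.
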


\begin{proof}
Let $f(a_1, a_2, \cdots, a_k) = a_1!\cdot a_2!\cdots a_k! \cdot \sqrt{a_1\cdot a_2 \cdots a_k}$. If there exists $a_j \neq a_i$ and $a_i + 2 \leq a_j$, then $f(a_1, \cdots, a_i + 1, \cdots, a_j-1, \cdots, a_k) < f(a_1, a_2, \cdots, a_k)$. This is because
\begin{align*}
    \frac{f(a_1, a_2, \cdots, a_k)}{f(a_1, \cdots, a_i + 1, \cdots, a_j-1, \cdots, a_k)} &= \sqrt{\frac{a_i a_j^3}{(a_i+1)^3(a_j-1)}} \\ &\overset{a_j=a_i+k}{=} \sqrt{\frac{a_i(a_i+k)^3}{(a_i+1)^3(a_i+k-1)}} \\
    &= \sqrt{\frac{a_i^4+3ka_i^3+3k^2a_i^2+k^3a_i}{a_i^4+(2+k)a_i^3+3ka_i^2+(3k-2)a_i+k-1}} >1,
\end{align*}
where the last inequality is due to 
\begin{align*}
    (2k-2)a_i^3+ (3k^2-3k)a_i^2+(k^3-3k+2)a_i \geq 1.
\end{align*}
We can adjust $a_i$ to $a_i+1$, $a_j$ to $a_j-1$ to get a lower value of $f$. After limited times of such adjustments, $f(a_1, \cdots, a_k)$ will be minimized when the integers $a_1, a_2, \cdots, a_k$ satisfy $\max\{|a_i-a_j|, 1\leq i,j \leq k\} \leq 1$.

Similarly, we can easily extend the above analysis to $\frac{1}{a_1}+\cdots+\frac{1}{a_k}$ and prove that it gets minimized if $\max\{|a_i-a_j|, 1\leq i,j \leq k\} \leq 1$.
\end{proof}

We next prove our main results Theorem~\ref{theorem:main}.
\begin{proof}
We first prove that when $t=1$, the sketching algorithm using ${\bm{S}_{1\times k}}$ with one array and $k$ bins (i.e., $k$ counters in the array) achieves  $\ln\left(1+\frac{\beta \alpha^2 k(k-1)}{\sigma^2(n-2)}(1+\ln(n-k))\right)$-differential privacy with high probability,  where $\beta$ is a constant satisfying $\frac{\alpha^2k(k-1)}{\sigma^2(n-2)}\left(1+\ln(n-k)\right) \leq \frac{1}{2}-\frac{1}{\beta}$.

Suppose two input vectors $D$ and $D'$ differ in one element with $D = {D_0}+\{c\}$ and $D' = {D_0}+\{d\}$. Since the output is independent of the order of compressing each element in $D$,  we consider mapping $c$ (or $d$) at last. $c$ (or $d$) is mapped to each bin with equal probability, and they might be flipped by the sign hash functions with a probability $\frac{1}{2}$. Therefore, 
\begin{align*}
    P(b_1, b_2, \cdots, b_k | D) &= \frac{1}{2k}P(b_1-c, b_2, \cdots, b_k | D_0) + \frac{1}{2k}P(b_1+c, b_2, \cdots, b_k | D_0) + \cdots \\ &+ \frac{1}{2k}P(b_1, b_2, \cdots, b_k+c | D_0) + \frac{1}{2k}P(b_1, b_2, \cdots, b_k-c | D_0), \\
    P(b_1, b_2, \cdots, b_k | D') &= \frac{1}{2k}P(b_1-d, b_2, \cdots, b_k | D_0) + \frac{1}{2k}P(b_1+d, b_2, \cdots, b_k | D_0) + \cdots \\ &+ \frac{1}{2k}P(b_1, b_2, \cdots, b_k+d | D_0) + \frac{1}{2k}P(b_1, b_2, \cdots, b_k-d | D_0).
\end{align*}

Denote $P_{(a_1, \cdots, a_k)}^n$ as the probability that $a_i$ $(1 \leq i \leq k)$ items are mapped to bin $i$ and $\sum_{i=1}^k a_i=n$. Denote $G_{(a_1, \cdots, a_k)}^n$ as the probability density function of a $k$-dimensional Gaussian $\mathcal{N}\left(0, \begin{bmatrix} 
\sigma^2 a_1 &         &  \\
    & \ddots &  \\
    &        & \sigma^2 a_k 
\end{bmatrix}\right)$ and $\sum_{i=1}^k a_i=n$. Note that 
\begin{align*}
    P(b_1-c, b_2, \cdots, b_k | D_0) = \sum_{\{a_1, \cdots, a_k\}} P_{(a_1, \cdots, a_k)}^{n-1} G_{(a_1, \cdots, a_k)}^{n-1} (b_1-c, b_2, \cdots, b_k). 
\end{align*}
\vspace{-2mm}
We first consider the upper-bound of:
\begin{align*}
    \frac{P(b_1-c, b_2, \cdots, b_k | D_0)}{P(b_1-d, b_2, \cdots, b_k | D_0)} = \frac{\sum_{\{a_1, \cdots, a_k\}} P_{(a_1, \cdots, a_k)}^{n-1} G_{(a_1, \cdots, a_k)}^{n-1} (b_1-c, b_2, \cdots, b_k)}{\sum_{\{a_1, \cdots, a_k\}} P_{(a_1, \cdots, a_k)}^{n-1} G_{(a_1, \cdots, a_k)}^{n-1} (b_1-d, b_2, \cdots, b_k)}, 
\end{align*}
where the joint probability distribution of number of items when putting $n-1$ items into $k$ bins $\{b_1, \cdots, b_k\}$ is denoted as $P_{(a_1, \cdots, a_k)}^{n-1} = \frac{(n-1)!}{k^{n-1}\cdot a_1! \cdots a_k!}$. We assume that each bin has at least one item, so there are $n-2 \choose k-1$ ways in total to put $n-1$ items into $k$ bins. Note that probability that each bin has at least one item given $k$ bins and $n-1$ items is $\sum_{j=0}^k (-1)^j {k \choose j} \left(1-\frac{j}{k}\right)^{n-1}$, so the following differential privacy properties with one hash function hold with a high probability at most $\sum_{j=0}^k (-1)^j {k \choose j} \left(1-\frac{j}{k}\right)^{n-1}$.

Denote $\mathbf{x}$ as $\{b_1-c, b_2, \cdots, b_k\}$, since $e^{-\frac{1}{2} \mathbf{x}^\intercal \Sigma^\intercal \mathbf{x}} \leq 1$, we have:
\begin{align}
    & \frac{\sum_{\{a_1, \cdots, a_k\}} P_{(a_1, \cdots, a_k)}^{n-1} G_{(a_1, \cdots, a_k)}^{n-1} (b_1-c, b_2, \cdots, b_k)}{\sum_{\{a_1, \cdots, a_k\}} P_{(a_1, \cdots, a_k)}^{n-1} G_{(a_1, \cdots, a_k)}^{n-1} (b_1-d, b_2, \cdots, b_k)} \\
    \leq & \frac{\sum_{\{a_1, \cdots, a_k\}}\frac{1}{a_1! \cdot a_2! \cdots a_k!}\frac{1}{\sqrt{a_1\cdot a_2 \cdots \cdot a_k}}}{\sum_{\{a_1, \cdots, a_k\}}\frac{1}{a_1! \cdot a_2! \cdots a_k!}\frac{1}{\sqrt{a_1\cdot a_2 \cdots \cdot a_k}} e^{\left[-\frac{1}{2\sigma^2}\left(\frac{(b_1-d)^2}{a_1}+\frac{(b_2)^2}{a_2}+\cdots+\frac{(b_k)^2}{a_k}\right)\right]}}\nonumber \\ 
    \leq & \frac{\sum_{\{a_1, \cdots, a_k\}}\frac{1}{a_1! \cdot a_2! \cdots a_k!}\frac{1}{\sqrt{a_1\cdot a_2 \cdots \cdot a_k}}}{\sum_{\{a_1, \cdots, a_k\}}\frac{1}{a_1! \cdot a_2! \cdots a_k!}\frac{1}{\sqrt{a_1\cdot a_2 \cdots \cdot a_k}} e^{\left[-\frac{1}{2\sigma^2}\left(\frac{\gamma}{a_1}+\frac{\gamma}{a_2}+\cdots+\frac{\gamma}{a_k}\right)\right]}}\label{eq:5},
\end{align}
where $\gamma=\max\{(b_1-d)^2, b_2^2, \cdots, b_k^2\}$. From the bounded estimation errors of sketching (Lemma~\ref{lemma:cs_error}), we know that $b_i^2 \leq \alpha^2~(\forall i)$ with a large probability. So $ \gamma \leq 4\alpha^2$ holds with high probability.

Note that \eqref{eq:5} can be viewed as an multiplicative inverse of weighted sum on $e^{\left[-\frac{1}{2}\left(\frac{\gamma}{a_1}+\frac{\gamma}{a_2}+\cdots+\frac{\gamma}{a_k}\right)\right]}$ with weights $\frac{1}{a_1!\cdot a_2! \cdots a_k!} \frac{1}{\sqrt{a_1\cdot a_2\cdots a_k}}$. From Lemma \ref{lemma:tech_1}, we know that both $e^{\left[-\frac{1}{2}\left(\frac{\gamma}{a_1}+\frac{\gamma}{a_2}+\cdots+\frac{\gamma}{a_k}\right)\right]}$ and the weights $\frac{1}{a_1!\cdot a_2! \cdots a_k!} \frac{1}{\sqrt{a_1\cdot a_2\cdots a_k}}$ are maximized simultaneously when $a_1, a_2, \cdots, a_k$ satisfies $\max\{|a_i-a_j|, 1\leq i, j \leq k\} \leq 1$. If we set all weights to be equal, then the inverse of the weighted sum will become larger, w.h.p. Therefore,
\begin{align}
    \frac{\sum_{\{a_1, \cdots, a_k\}}\frac{1}{a_1! \cdot a_2! \cdots a_k!}\frac{1}{\sqrt{a_1\cdot a_2 \cdots \cdot a_k}}}{\sum_{\{a_1, \cdots, a_k\}}\frac{1}{a_1! \cdot a_2! \cdots a_k!}\frac{1}{\sqrt{a_1\cdot a_2 \cdots \cdot a_k}} e^{\left[-\frac{1}{2\sigma^2}\left(\frac{\gamma}{a_1}+\frac{\gamma}{a_2}+\cdots+\frac{\gamma}{a_k}\right)\right]}} &\leq \frac{{n-2 \choose k-1}}{\sum_{\{a_1, \cdots, a_k\}}e^{\left[-\frac{1}{2\sigma^2}\left(\frac{\gamma}{a_1}+\frac{\gamma}{a_2}+\cdots+\frac{\gamma}{a_k}\right)\right]}} \nonumber \\
    &\leq \frac{{n-2 \choose k-1}}{\sum_{\{a_1, \cdots, a_k\}}\left(1-\frac{1}{2\sigma^2}\left(\frac{\gamma}{a_1}+\frac{\gamma}{a_2}+\cdots+\frac{\gamma}{a_k}\right)\right)} \label{eq:6}\\
    &= \frac{{n-2 \choose k-1}}{{n-2 \choose k-1}-\frac{1}{2\sigma^2}\sum_{\{a_1, \cdots, a_k\}}\left(\frac{\gamma}{a_1}+\frac{\gamma}{a_2}+\cdots+\frac{\gamma}{a_k}\right)} \nonumber\\
    &= \frac{1}{1-\frac{\frac{1}{2 \sigma^2}\sum_{\{a_1, \cdots, a_k\}}\left(\frac{\gamma}{a_1}+\frac{\gamma}{a_2}+\cdots+\frac{\gamma}{a_k}\right)}{{n-2 \choose k-1}}} \nonumber,
\end{align}
\eqref{eq:6} holds because for any real number $x$, $e^{-x} \geq 1-x$. \eqref{eq:6} also requires $\sum_{\{a_1, \cdots, a_k\}}\left(1-\frac{1}{2\sigma^2}\left(\frac{\gamma}{a_1}+\frac{\gamma}{a_2}+\cdots+\frac{\gamma}{a_k}\right)\right) \\ \geq 0$, which we will enforce again later.

We next consider to upper-bound $\frac{\frac{\gamma}{2\sigma^2} \sum_{\{a_1, \cdots, a_k\}}\left(\frac{1}{a_1}+\frac{1}{a_2}+\cdots+\frac{1}{a_k}\right)}{{n-2 \choose k-1}}$.

If we place $j$ items into one specific bin, there are ${n-2-j \choose k-2}$ ways to put the remaining $n-1-j$ items to $k-1$ bins. Therefore, in the sum $\sum_{\{a_1, \cdots, a_k\}}\left(\frac{1}{a_1}+\frac{1}{a_2}+\cdots+\frac{1}{a_k}\right)$, for each bin $i$, $a_i=j~(1 \leq j \leq n-k)$ appears ${n-2-j \choose k-2}$ times. And there are $k$ bins, so we have:
\begin{align*}
    \frac{\frac{\gamma}{2\sigma^2} \sum_{\{a_1, \cdots, a_k\}}\left(\frac{1}{a_1}+\frac{1}{a_2}+\cdots+\frac{1}{a_k}\right)}{{n-2 \choose k-1}} &= \frac{\frac{\gamma}{2\sigma^2}k \left(1{n-3 \choose k-2}+\frac{1}{2}{n-4 \choose k-2} + \cdots + \frac{1}{n-k}{k-2 \choose k-2}\right)}{{n-2 \choose k-1}} \\
    &= \frac{\frac{\gamma}{2\sigma^2}k \sum_{i=1}^{n-k}\frac{1}{i}{n-2-i \choose k-2}}{{n-2 \choose k-1}}.
\end{align*}

By expanding and rearranging the terms, we get:
\begin{align}
\frac{\frac{\gamma}{2\sigma^2}k \sum_{i=1}^{n-k}\frac{1}{i}{n-2-i \choose k-2}}{{n-2 \choose k-1}} \nonumber &= \frac{\frac{\gamma}{2\sigma^2}k(k-1)}{(n-2)(n-3)\cdot \cdots (n-k)} \sum_{i=1}^{n-k}\frac{1}{i} \left(n-i-2\right)\left(n-i-3\right)\cdots \left(n-i-(k-1)\right)\nonumber \\
&= \frac{\frac{\gamma}{2\sigma^2}k(k-1)}{n-2} \sum_{i=1}^{n-k}\frac{1}{i}\frac{(n-i-2)(n-i-3)\cdots(n-i-(k-1))}{(n-3)(n-4)\cdot \cdots (n-k)} \\
&\leq \frac{\frac{\gamma}{2\sigma^2}k(k-1)}{n-2}\sum_{i=1}^{n-k}\frac{1}{i} \\
&\leq \frac{\frac{\gamma}{2\sigma^2}k(k-1)}{n-2}\left(1+\ln(n-k)\right) \label{eq:7}\\
&\leq  \frac{2\alpha^2 k(k-1)}{\sigma^2(n-2)}\left(1+\ln(n-k)\right) \label{eq:8}.
\end{align}

\eqref{eq:7} holds because $\sum_{i=1}^k \frac{1}{i} \leq 1 + \ln k~(\forall k)$, and \eqref{eq:8} is due to $\gamma \leq 4\alpha^2$ (with high probability).

Therefore, 
\begin{align*}
    \frac{1}{1-\frac{\frac{\gamma}{2\sigma^2}\sum_{\{a_1, \cdots, a_k\}}\left(\frac{1}{a_1}+\frac{1}{a_2}+\cdots+\frac{1}{a_k}\right)}{{n-2 \choose k-1}}} \leq  \frac{1}{1-\frac{2\alpha^2 k(k-1)}{\sigma^2(n-2)}(1+\ln(n-k))} \leq 1+\frac{\beta \alpha^2 k(k-1)}{\sigma^2(n-2)} \left(1+\ln(n-k)\right)~(\beta>0),
\end{align*}
where the second inequality holds when $\frac{\alpha^2 k(k-1)}{\sigma^2(n-2)}\left(1+\ln(n-k)\right) \leq \frac{1}{2}- \frac{1}{\beta}~(\beta>0)$.

Thus,
\begin{align*}
    \frac{P(b_1-c, b_2, \cdots, b_k | D_0)}{P(b_1-d, b_2, \cdots, b_k | D_0)} \leq 1+\frac{\beta \alpha^2 k(k-1)}{\sigma^2(n-2)} (1+\ln(n-k)).
\end{align*}
Similarly, for any $i~(1\leq i \leq k)$, we have:
\begin{align*}
        \frac{P(b_1, \cdots, b_i-c, \cdots, b_k | D_0)}{P(b_1, \cdots, b_i-d, \cdots, b_k | D_0)} \leq 1+\frac{\beta \alpha^2 k(k-1)}{\sigma^2(n-2)} (1+\ln(n-k)),
\end{align*}
and 
\begin{align*}
    \frac{P(b_1, \cdots, b_i+c, \cdots, b_k | D_0)}{P(b_1, \cdots, b_i+d, \cdots, b_k | D_0)} \leq 1+\frac{\beta \alpha^2 k(k-1)}{\sigma^2 (n-2)} (1+\ln(n-k)).
\end{align*}
Thus, 
\begin{align*}
    \frac{P(b_1, b_2,\cdots, b_k | D)}{P(b_1, b_2, \cdots, b_k | D')} \leq 1+\frac{\beta \alpha^2 k(k-1)}{\sigma^2(n-2)} (1+\ln(n-k)),
\end{align*}
which indicates that the sketching algorithm $\mathcal{M}$ with input size $n$ using Count Sketch with $k$ bins ($k$ counters) and 1 hash function (1 array) achieves $\ln\left(1+\frac{\beta \alpha^2 k(k-1)}{\sigma^2(n-2)}(1+\ln(n-k))\right)$-differential privacy with a large probability, where $\beta$ is a positive constant satisfying $\frac{\alpha^2k(k-1)}{\sigma^2 (n-2)}(1+\ln(n-k)) \leq \frac{1}{2}-\frac{1}{\beta}$. Since the $t$ counter arrays are pairwise independent, it follows that Count Sketch with $k$ bins ($k$ counters) and $t$ arrays achieves $t \cdot \ln\left(1+\frac{\beta \alpha^2 k(k-1)}{\sigma^2(n-2)}(1+\ln(n-k))\right)$-differential privacy with a large probability, where $\beta$ is a positive constant satisfying $\frac{\alpha^2k(k-1)}{\sigma^2(n-2)}(1+\ln(n-k)) \leq \frac{1}{2}-\frac{1}{\beta}$.
\end{proof}

\section{Proof for Theorem~\ref{theorem:fl_convergence}}\label{app:convergence}

{\bf Theorem}: (Convergence of \name in distributed SGD)
Assume that $E\left[\|g_k\|^2\right] \leq G^2$ for any input stochastic gradient $g_k$ on worker $k$ with dimension $n$, $E\left[\|x_i - x^*\|^2\right] \leq D^2$ at any iteration $i$, and the local model at device $k$ $f_k(x)$ is convex. Choose the step-size at round $i$  $\eta_i=\frac{c}{\sqrt{i}}$ where $c$ is a pre-defined positive number. Using a Count Sketch with $t$ hash functions and $k$ bins, with probability $1-\delta$, we have the following convergence rate on the global objective $F$:
\begin{equation*}
    F(\bar{x}_i)-F(x^*) \leq \frac{\frac{D^2}{2c}+c\sqrt{\frac{i+1}{i}}\left(n\mu^2+1\right)G^2}{\sqrt{i}},
\end{equation*}
where $x^*$ is the optimal solution to \eqref{eq:obj}, $k=O\left(\frac{e}{\mu^2}\right)$, $t=O\left(\ln(\frac{1}{\delta})\right)$, and $\bar{x}_i$ = $\frac{1}{i} \sum_{j=1}^i x_j$.

\begin{proof}

Because the local loss function $f_k$ and the global loss function $F$ are convex, we have
\begin{align*}
    \left\langle \nabla f_k(x_i), x_i-x^* \right\rangle \geq f_k(x_i)-f_k(x^*),
\end{align*}
and
\begin{align*}
    \langle \nabla F(x_i), x_i-x^* \rangle \geq F(x_i)-F(x^*).
\end{align*}
We use $\tilde{g}_k$ to denote the estimated stochastic gradient on worker $k$ after querying the sketch table at the current iteration $i$ (we omit the subscript $i$ for cleaner notations). Suppose $K$ worker are selected at each updating round, the updating rule is $x_{i+1} = x_{i} - \eta_i \frac{1}{K} \sum_{k=1}^K \tilde{g}_k$, and we have
\begin{align*}
    \mathbb{E}\left[\left\|x_{i+1}-x^*\right\|^2\right] &= \mathbb{E}\left[\left\|x_i-\eta_i \frac{1}{K} \sum_{k=1}^K \tilde{g}_k-x^*\right\|^2\right] \\
    &= \mathbb{E}\left[\left\|x_i-x^*\right\|^2\right] - 2\eta_i \mathbb{E}\left[\left\langle \frac{1}{K}\sum_{k=1}^K \tilde{g}_k, x_i-x^*\right\rangle\right] + \eta_i^2\mathbb{E}\left[\left\| \frac{1}{K}\sum_{k=1}^K \tilde{g}_k\right\|^2\right].
\end{align*}
From Lemma \ref{lemma:cs_error}, we know that the estimation error of Count Sketch is bounded. In particular, it holds that with probability $p  \geq 1-\delta$,
\begin{equation*}
    \mathbb{E}\left[\left\|{\tilde{g}_k-g_k}\right\|^2\right] \leq \mathbb{E}\left[n\left(\mu \left\|g_k\right\|_2\right)^2\right] \leq n\mu^2G^2,
\end{equation*}
where $k=O\left(\frac{e}{\mu^2}\right)$ and $t=O\left(\ln\left(\frac{1}{\delta}\right)\right)$.
Thus,
\begin{align*}
    \mathbb{E}\left[\left\|\tilde{g}_k\right\|^2\right] = \mathbb{E}\left[\left\|\tilde{g}_k-g_k+g_k\right\|^2\right]&\leq \mathbb{E}\left[2\left(\|\tilde{g}_k-g_k\|^2 + \|g_k\|^2\right)\right] \\
    &\leq 2\left(n\mu^2+1\right)G^2.
\end{align*}
Further, we have
\begin{align*}
    \mathbb{E}\left[\left\| \frac{1}{K}\sum_{k=1}^K \tilde{g}_k\right\|^2\right] &= \frac{1}{K^2}\mathbb{E}\left[\left\|\sum_{k=1}^K \tilde{g}_k\right\|^2\right] \\
    & \leq \frac{1}{K^2} \cdot K \cdot \sum_{k=1}^K \mathbb{E}\left[\left\|\tilde{g}_k\right\|^2\right] \\
    & \leq 2\left(n\mu^2+1\right)G^2.
\end{align*}
In addition, as sketches produce an unbiased estimation of the stochastic gradient, which is an unbiased estimation of the true gradient, we have $\mathbb{E}\left[\tilde{g}_k\right]=\mathbb{E}\left[\nabla f_k\right]$ (the expectation is with respect to the randomly sampled data point and the randomized sketch algorithm). And $\mathbb{E}\left[\frac{1}{K}\sum_{k=1}^K\tilde{g}_k\right] = \mathbb{E}\left[\frac{1}{K}\sum_{k=1}^K \nabla f_k\right] = \mathbb{E}\left[\nabla F(x_i)\right]$
Applying the bounded variance and the unbiased compression, it follows:
\begin{align*}
    \mathbb{E}\left[\|x_{i+1}-x^*\|^2\right] &\leq  \mathbb{E}\left[\|x_i-x^*\|^2\right] -2\eta_i \mathbb{E}\left[\langle \nabla F(x_i), x_i-x^*\rangle\right]+ 2\eta_i^2 \left(n\mu^2+1\right)G^2 \\ 
    & \leq  \mathbb{E}\left[\|x_i-x^*\|^2\right] -2\eta_i \left(F(x_i)-F(x^*)\right)+ 2\eta_i^2 \left(n\mu^2+1\right)G^2 \\
    \Rightarrow & 2\left(F(x_i)-F(x^*)\right) \leq \frac{1}{\eta_i}\mathbb{E}\left[\|x_i-x^*\|^2\right] - \frac{1}{\eta_i}\mathbb{E}\left[\|x_{i+1}-x^*\|^2\right] + 2\eta_i \left(n\mu^2+1\right)G^2. 
\end{align*}
Summarizing the inequalities for $j=1, \cdots, i$, we get
\begin{align*}
    2\sum_{j=i}^i \left(F(x_j)-F(x^*)\right) &\leq \frac{1}{\eta_j}\mathbb{E}\left[\|x_1-x^*\|^2\right] + \sum_{j=2}^i \left(\frac{1}{\eta_j}-\frac{1}{\eta_{j-1}}\right)\mathbb{E}\left[\|x_j-x^*\|^2\right] + \sum_{j=1}^k 2\eta_j \left(n\mu^2+1\right)G^2 \\
    &\leq \frac{1}{\eta_j}D^2 + \sum_{j=2}^i \left(\frac{1}{\eta_j}-\frac{1}{\eta_{j-1}}\right)D^2 + \sum_{j=1}^i 2\eta_j\left(n\mu^2+1\right)G^2\\
    & \leq \frac{\sqrt{i}}{c}D^2 + 2c\sqrt{i+1}\left(n\mu^2+1\right)G^2.
\end{align*}
From Jensen's inequality, we have
\begin{align*}
    \sum_{j=1}^i F(x_j)-F(x^*) \geq F(\bar{x}_i)-F(x^*),
\end{align*}
which indicates that
\begin{align*}
    F(\bar{x}_i)-F(x^*) \leq \frac{\frac{\sqrt{i}}{2c}D^2 + c\sqrt{i+1}\left(n\mu^2+1\right)G^2}{i}.
\end{align*}
This completes the proof.
\end{proof}

\newpage
\section{Evaluation Details}\label{app:eval}

\subsection{Implementation Details}\label{app:eval:implementaion}
We first describe the implementation details for each experiment comparing with different baselines.

\underline{Figure~\ref{fig:mnist}.} We perform error correction for both \name and the baseline method of adding Laplacian noise after sketching. At each round, each local worker computes the gap between the latest local gradient and the (estimated) aggregated gradients. Since data are identically distributed across all workers, if the estimation is correct, we expect that the gap should be small. Therefore, each worker sets half of the queried results to zero if he sees a large gap. For \name, we also generate a noise vector drawn from the same Gaussian as the raw gradients, and append that noise vector to the input for compression. The noise vector increases the size of the input, thus boosting the privacy for sketches. 

\underline{Figure~\ref{fig:shakespeare}.} We do not perform error correction for any methods, since the local gradients would be stale in federated learning due to device sampling. Similarly, we append a noise vector after the raw model updates.

\underline{Figure~\ref{fig:compare_cpsgd}.} We use the $\delta, s, p$ values suggested in~\citet{agarwal2018cpsgd}. We set up 6,000 workers (MNIST*) for this experiment, and select 10 at each round. We use a different variant of MNIST because the privacy bound of \cpsgd will be tighter with a larger number of total workers. For \cpsgd, we calculate the $k,m$ values (corresponding to the $k,m$ parameters in the original paper) based on the given $\varepsilon$ and compression ratios, as summarized in the following table~\ref{table: cpsgd}.

\begin{table}[h]
	\begin{center}
		\caption{Parameter configuratioins for \cpsgd}
		\label{table: cpsgd}
		\begin{tabular}{ c c c c } 
			\toprule
			$\varepsilon$ & Bits (Compression) & $k$ &  $m$ \\
			\hline
			5  & 4 (10$\times$) & 12 & 4 \\
			5  & 5 (6$\times$) & 23 & 9 \\
			10  & 4 (10$\times$) & 14 & 2 \\
			10  & 5 (6$\times$) & 28 & 4 \\
			\bottomrule
		\end{tabular}
	\end{center}
\end{table}

{\bf Datasets.} We summarize the statistics of the datasets below.

\setlength{\tabcolsep}{2.5pt}
\begin{table}[h]
	\begin{center}
		\caption{Statistics of Datasets}
		\label{table: data}
		\begin{tabular}{ lllll } 
			\toprule
			\textbf{Dataset} & \textbf{Workers} & \textbf{Param.} & 
			\multicolumn{2}{l}{\textbf{Samples/device}} \\
			\cmidrule(l){4-5}
			& &  & mean & stdev \\
			\hline
			MNIST  & 10 & 7,850 & 200 & 0 \\
			Shakespeare  & 46 & 39,720 & 742 & 548 \\
			MNIST*  & 6,000 & 7,850 & 10 & 0 \\
			\bottomrule
		\end{tabular}
	\end{center}
\end{table}

{\bf Hyper-parameters.} We assume that the input gradients are bounded by a constant $\alpha$ with a 90\% probability; therefore, we dynamically choose the 90th percentile value of the local gradient vector as $\alpha$ on each worker. For all experiments, we use a batch size of 10. The learning rates of MNIST, MNSIT*, and Shakespeare are 0.01, 0.01, and 0.8. For each comparison, we fix the mini-batch orders and (if needed) the selected devices per round.

\newpage
\subsection{Real Gradient Distributions}\label{app:eval:grad}

We plot the real gradient distributions for MNIST and model updates distribution for Shakespeare without compression in Figure~\ref{fig:mnist_grad} and Figure~\ref{fig:shakespeare_grad}. When applying \name, we observe that their gradient/model update distributions throughout the training across are similar. In MNIST, there are ten workers participating in training at each round and the data on the ten workers are identically distributed. We randomly select one worker at each round and report the gradient distribution from that worker. We repeat this for ten rounds. In Shakespeare, we only sample one device per round; therefore we show the model update distributions of ten selected devices in ten rounds. 

\begin{figure}[h]
    \centering
    \includegraphics[width=0.95\textwidth]{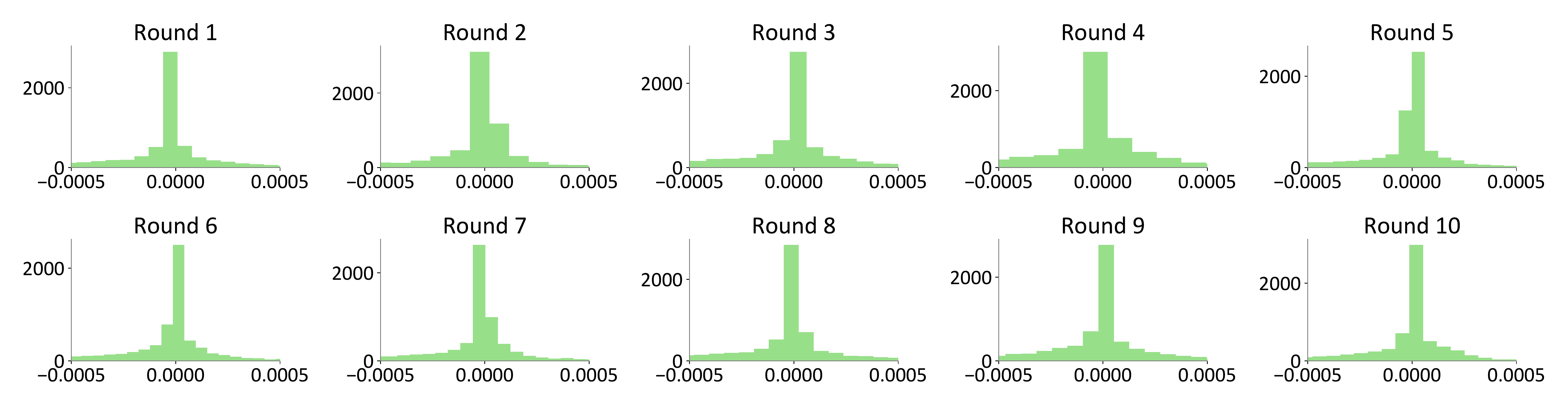}
    \caption{Gradient distributions of MNIST}
    \label{fig:mnist_grad}
\end{figure}
\begin{figure}[h]
    \centering
    \includegraphics[width=0.95\textwidth]{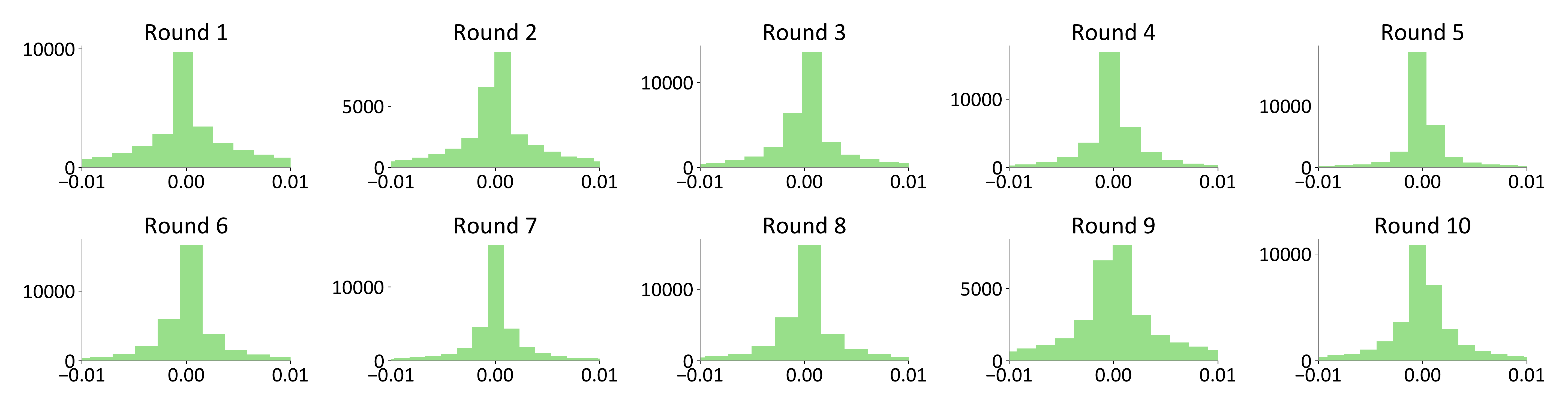}
    \caption{Distributions of model updates of Shakespeare}
    \vspace{-5mm}
    \label{fig:shakespeare_grad}
\end{figure}

\subsection{Privacy Parameter ($\varepsilon$) Distributions Across All Rounds}
\vspace{-3mm}
As mentioned before, due to the inherent privacy properties of sketches, a small amount of (Laplacian) noise is sufficient to provide a certain level of differential privacy guarantees. In Figure~\ref{fig:eps}, we visualize the histogram of the local $\varepsilon$ values on the MNIST dataset (with a $75\times$ compression ratio and $\varepsilon=1$) across all rounds.
We see that most of the $\varepsilon$ values guaranteed by sketches are bounded by the input privacy requirement 0.1, and we need to add additional Laplacian noise in a few cases to obtain a consistent privacy bound.

\begin{figure}[h]
    \centering
    \includegraphics[width=0.2\textwidth]{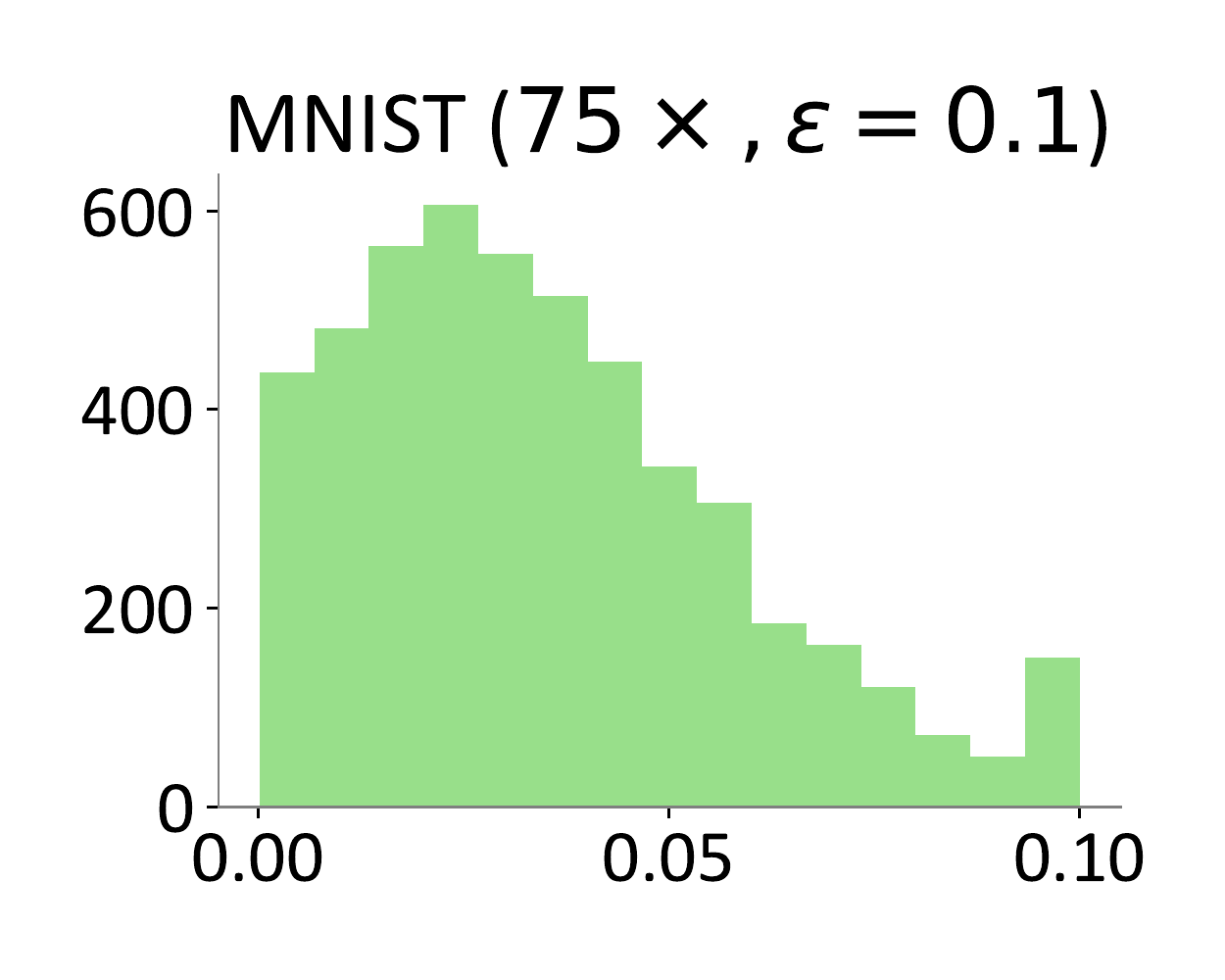}
    \caption{Local $\varepsilon$ values across all rounds when the input $\varepsilon$ requirement is 0.1.}
    \vspace{-3mm}
    \label{fig:eps}
\end{figure}

\end{document}